\newcommand{\F}{\mathcal{F}_\varphi}
\newcommand{\card}[1]{\left| #1 \right|}
\renewcommand{\d}[1]{{d\left({#1}\right)}}
\newcommand{\depth}[1]{{d\left({#1}\right)}}
\newcommand{\E}[2]{E_{#1}^{#2}}
\renewcommand{\P}[2]{P_{#1}^{#2}}
\newcommand{\Paphi}{\P{a}{\depth{\varphi}}}
\newcommand{\KI}{\eqref{eq:KP}}
\newcommand{\KIp}{\eqref{eq:KP'}}
\newcommand{\TAz}{\eqref{eq:TA0}}
\newcommand{\TA}{\eqref{eq:TA}}
\newcommand{\N}{\mathbb{N}}
\newcommand{\Z}{\mathbb{Z}}
\renewcommand{\L}{\mathcal{L}}
\newcommand{\agents}{\mathcal{A}}
\newcommand{\atoms}{\mathcal{P}}
\newcommand{\states}{\mathcal{S}}
\renewcommand{\H}{\mathcal{H}}
\newcommand{\Ha}{\mathcal{H}_a}
\newcommand{\DPAL}{\textbf{DPAL}}
\newcommand{\DBEL}{\textbf{DBEL}}
\newcommand{\GL}{\( {GL}_{EF} \)}
\newcommand{\TSo}{\textbf{EDPAL}}
\newcommand{\TSt}{\textbf{ADPAL}}
\newcommand{\Mn}{M_n}
\newcommand{\Mhn}{\hat{M}_n}
\theoremstyle{definition}
\newtheorem{defi}{Definition}
\theoremstyle{plain}
\newtheorem{thm}{Theorem}[section]
\newtheorem{prop}[thm]{Proposition}
\newtheorem{lmm}[thm]{Lemma}
\theoremstyle{remark}
\title{Depth-bounded Epistemic Logic}
\author{Farid Arthaud
\institute{MIT\\ Cambridge, Massachusetts}
\email{farto@csail.mit.edu}
\and
Martin Rinard
\institute{MIT\\ Cambridge, Massachusetts}
\email{rinard@csail.mit.edu}
}
\begin{document}

\maketitle

\begin{abstract}
	Epistemic logics model how agents reason about their beliefs and the
	beliefs of other agents. Existing logics typically assume the ability of
	agents to reason perfectly about propositions of unbounded modal depth.
	We present~\DBEL{}, an extension of \textbf{S5} that models agents that
	can reason about epistemic formulas only up to a specific modal depth. To
	support explicit reasoning about agent depths, \DBEL{} includes depth
	atoms \( \E{a}{d} \) (agent \( a \) has depth exactly  \( d \)) and \(
	\P{a}{d} \) (agent \( a \) has depth at least \( d \)). We provide a
	sound and complete axiomatization of~\DBEL{}.

	We extend \DBEL{} to support public announcements for bounded depth
	agents and show how the resulting \DPAL{} logic generalizes standard
	axioms from public announcement logic. %
	We present two alternate extensions and identify two undesirable
	properties, amnesia and knowledge leakage, that these extensions have but
	\DPAL{} does not. We provide axiomatizations of these logics as well as
	complexity results for satisfiability and model checking.

	Finally, we use these logics to illustrate how agents with bounded modal
	depth reason in the classical muddy children problem, including upper and
	lower bounds on the depth knowledge necessary for agents to successfully
	solve the problem.
\end{abstract}

\section{Introduction}
Epistemic logics model how agents reason about their beliefs and the beliefs of
other agents. These logics generally assume the ability of agents to perfectly
reason about propositions of unbounded modal depth, which can be seen
as unrealistic in some
contexts~\cite{DBLP:journals/corr/abs-2008-08849,DBLP:journals/jolli/VerbruggeM08}.

To model agents with the ability to reason only to certain preset modal depths,
we extend the syntax of epistemic logic
\textbf{S5}~\cite{DBLP:books/mit/FHMV1995} to depth-bounded epistemic logic
(DBEL).
The \DBEL{} semantics assigns each agent a depth in each state.
For an agent to know a formula \( \psi \) in a given state of a model, the
assigned depth of the agent must be at least the modal depth of \( \psi \), i.e.
\( \d{\psi} \).
To enable agents to reason about their own and other agents' depths, \DBEL{}
includes \textbf{depth atoms} \( \E{a}{d} \) (agent \( a \) has depth exactly  \( d
\)) and \( \P{a}{d} \) (agent \( a \) has depth at least \( d \)).
For example, the formula \( K_a( \P{b}{5} \to K_b p ) \) expresses the fact that,
``agent \( a \) knows that whenever agent \( b \) is depth at least \( 5 \),
agent \( b \) knows the fact \( p \).''
Depth atoms enable agents to reason about agent depths and their consequences
in contexts in which each agent may have complete, partial, or even no
information about agent depths (including its own depth).

We provide a sound and complete axiomatization of \DBEL{}
(Section~\ref{sec:dbel}), requiring a stronger version of the \textsc{Lindenbaum}
lemma which ensures each agent can be assigned a depth (proven in
Appendix~\ref{app:lind}).
Its satisfiability problem for two or more agents is immediately
\textsf{PSPACE}-hard (because \DBEL{} includes \textbf{S5} as a syntactic
fragment).
We provide a depth satisfaction algorithm for \DBEL{} in \textsf{PSPACE}
(Section~\ref{sec:complex}), establishing that the \DBEL{} satisfiability problem
is \textsf{PSPACE}-complete for two or more agents.

Public announcement logic (PAL)~\cite{DBLP:journals/jolli/GerbrandyG97} extends
epistemic logic with public announcements.
PAL includes the following public announcement and knowledge
axiom~\eqref{eq:intro}, which characterizes agents' knowledge after public
announcements,
\begin{equation}\label{eq:intro}
	[\varphi] K_a \psi \leftrightarrow (\varphi \to K_a [\varphi] \psi).
	\tag{PAK}
\end{equation}
We extend \DBEL{} to include public announcements (Section~\ref{sec:dpal}).
The resulting depth-bounded public announcement logic (DPAL) provides a semantics
for public announcements in depth-bounded epistemic logic, including a
characterization of how agents reason when public announcements exceed their
epistemic depth.
We prove the soundness of several axioms that generalize~\eqref{eq:intro} to
\DPAL{}, first in a setting where each agent has exact knowledge of its own
depth, then in the general setting where each agent may have partial or even no
knowledge of its own depth.
We provide a sound axiom set for \DPAL{} as well as an upper bound on the
complexity of its model checking problem~\footnote{Arthaud and
Rinard~\cite{DBLP:journals/corr/abs-2305-08607} present a lower bound for this
problem, as well as additional results, proofs and content.}

We also present two alternate semantics that extend \DBEL{} with public
announcements (Section~\ref{subsec:s12}).
The resulting logics verify simpler generalizations of~\eqref{eq:intro} in the
context of depth-bounded agents, but each has one of two undesirable properties
that we call \textit{amnesia} and \textit{knowledge leakage}.
Amnesia causes agents to completely forget about all facts they knew after
announcements, whereas knowledge leakage means shallow agents can infer
information from what deeper agents have learned from a public announcement.
\DPAL{} suffers from neither of these two undesirable properties.
We provide a sound and complete axiomatization of the first of the two alternate
semantics (Section~\ref{sec:ax}).
We also prove the \textsf{PSPACE}-completeness of its satisfiability problem and
show that its model checking problem remains \textsf{P}-complete
(Section~\ref{sec:complex}).

Finally, we use these logics to illustrate how agents with bounded depths reason
in the muddy children reasoning problem~\cite{DBLP:books/mit/FHMV1995}.
We prove a lower bound and an upper bound on the structure of knowledge of depths required for agents to solve this
problem (Section~\ref{sec:muddy}).

\paragraph{Related work}
Logical omniscience, wherein agents are capable of deducing any fact deducible
from their knowledge, is a well-known property of most epistemic logics. The
ability of agents to reason about facts to unbounded modal depth is a
manifestation of logical omniscience. Logical omniscience has been viewed as
undesirable or unrealistic in many contexts~\cite{DBLP:books/mit/FHMV1995} and
many attempts have been made to mitigate or eliminate
it~\cite{DBLP:books/mit/FHMV1995,meyer2003modal,DBLP:journals/ijis/Sim97}.
To the best of our knowledge, only
Kaneko and Suzuki~\cite{DBLP:conf/aiml/KanekoS00} below have involved modal depth
in the treatment of logical omniscience in epistemic logic.

Kaneko and Suzuki~\cite{DBLP:conf/aiml/KanekoS00} define the logic of shallow
depths~\GL{}, which relies on a set \( E \) of chains of agents \( (i_1, \ldots,
i_k) \) for which chains of modal operators \( K_{i_1} \cdots K_{i_m} \) can
appear.
A subset \( F \subseteq E \) restricts chains of modal operators along
which agents can perform deductions about other agents' knowledge.
An effect of bounding agents' depths in~\DPAL{} is creating a set of allowable
chains of modal operators \( \cup_a \{ (a, i_1, \ldots, i_{d_a}), \; (i_1,
\ldots, i_{d_a}) \in \agents^{d_a} \} \).
Unlike~\GL{}, the bound on an agent's depth is not global in~\DPAL{}, it can
also be a function of the worlds in the Kripke possible-worlds semantics~\cite{DBLP:books/mit/FHMV1995}.
In particular, \DPAL{}, unlike~\GL{}, enables agents to reason about their own
depth, the depth of other agents, and (recursively) how other agents reason about
agent depths.
\DPAL{} also includes public announcements, which to the best of our knowledge
has not been implemented in~\GL{}.

Kline~\cite{kline2013evaluations} uses~\GL{} to investigate the \( 3 \)-agent
muddy children problem, specifically by deriving minimal epistemic structures \(
F \) that solve the problem.
The proof relies on a series of
belief sets with atomic updates called ``resolutions,'' with the nested length of
the chains in \( F \) providing epistemic bounds on the required reasoning.
\DPAL{}, in contrast, includes depth atoms and public announcements as
first-class features.
We leverage these features to directly prove theorems expressing that for \( k \)
muddy children,
\textit{(i)} (Theorem~\ref{thm:lowbound}) if the problem is solvable by an agent,
that agent must have depth at least \( k-1 \) and know that it has depth at least
\( k-1 \) (this theorem provides a lower bound on the agent depths required to
solve the problem)
and \textit{(ii)} (Theorem~\ref{thm:uppbound}) if an agent has depth at least \(
k-1 \), knows it, knows another agent is depth at least \( k-2 \), knows that the
other agent knows of another agent of depth \( k-2 \), \textit{etc}., then it can
solve the problem (this theorem provides an upper bound on the agent depths
necessary to solve the problem).
Our depth bounds match the depth bounds of Kline~\cite{kline2013evaluations} for
\( 3 \) agents (Theorems~3.1 and~3.3 in~\cite{kline2013evaluations}), though our
bounds also provide conditions on recursive knowledge of depths for the agents as
described above.

Dynamic epistemic logic
(DEL)~\cite{DBLP:journals/synthese/BaltagM04,van2007dynamic} introduces more
general announcements. Private announcements are conceptually similar to public
announcements in \DPAL{} in that they may be perceived by only some of the
agents. In DEL, model updates depend only on the relation between states in the
initial model and the relations in the action model. But in \DPAL{}, model
updates must also take into account the agent depths in the entire connected components of
each state (see Definition~\ref{defi:dpal}).

Resource-bounded agents in epistemic logics have been explored by Balbiani et.\
al~\cite{DBLP:conf/atal/BalbianiDL16} (limiting perceptive and inferential
steps), Artemov and Kuznets~\cite{DBLP:conf/tark/ArtemovK09} (limiting the
computational complexity of inferences), and Alechina et.\
al~\cite{DBLP:conf/kramas/AlechinaLNR08} (bounding the size of the set of
formulas an agent may believe at the same time and introducing communication
bounds).
Alechina et.\ al~\cite{DBLP:conf/kramas/AlechinaLNR08} also bound the modal depth
of formulas agents may believe, but all agents share the same depth bound and
they leave open the question of whether inferences about agent depth or memory
size could be implemented, which~\DPAL{} does.

\section{Depth-bounded epistemic logic}\label{sec:dbel}
The modal depth \( \d{\varphi} \) of a formula \( \varphi \), defined as the
largest number of modal operators on a branch of its syntactic tree, is the
determining factor of the complexity of a formula in depth-bounded epistemic
logic (DBEL).
Modal operators are the main contributing factor to the complexity of model
checking a formula; the recursion depth when checking satisfiability of a formula
is equivalent to its modal depth~\cite{DBLP:conf/atal/Lutz06}; and bounding modal
depth often greatly simplifies the complexity of the satisfiability problem in
epistemic logics~\cite{DBLP:conf/aiml/Nguyen04}.
Humans are believed to reason within limited modal depth~\cite{DBLP:journals/corr/abs-2008-08849,DBLP:journals/jolli/VerbruggeM08}.

We extend the syntax of classical epistemic logic by assigning to each agent \( a
\) in a set of agents \( \agents \) a depth \( d(a, s) \) in each possible world
\( s \). The language also includes \textbf{depth atoms} \( \E{a}{d} \) and \(
\P{a}{d} \) to respectively express that agent \( a \) has depth exactly \( d \)
and agent \( a \) has depth at least \( d \).

To know a formula \( \varphi \), agents are required to be at least as deep as \(
\d{\varphi} \) and also know that the formula \( \varphi \) is true in the usual
possible-worlds semantics sense~\cite{DBLP:books/mit/FHMV1995}.
We translate the classical modal operator \( K_a \) from multi-agent epistemic
logic into the operator \( K^\infty_a \) with the same properties, therefore \(
K^\infty_a \varphi \) can be interpreted as ``agent \( a \) would know \( \varphi
\) if \( a \) were of infinite depth''.
The operator \( K_a \varphi \) will now take the meaning described above, i.e. \(
\Paphi \wedge K^\infty_a \varphi \).
\begin{defi}\label{defi:l}
	The language of \DPAL{} is inductively defined as, for all agents \( a
	\in \agents \) and depths \( d \in \N \),
	\[
	\L^\infty := \varphi = p \; | \; \E{a}{d} \; | \; \P{a}{d} \; | \; \neg
	\varphi \; | \; \varphi \wedge \varphi \; | \; K_a \varphi \; | \;
	K^\infty_a \varphi \; | \; [\varphi] \varphi.
	\]
	The \( K^\infty_a \) operator is used mainly as a tool in axiomatization
	proofs, we call \( \L \) the fragment of our logic formulas without any \( K^\infty_a \) operators, which will be used in most of our theorems.
	We further define \( \H^\infty \) and \( \H \) to respectively be the
	syntactic fragments of \( \L^\infty \) and \( \L \) without public
	announcements \( [\varphi] \psi \).

	The modal depth \( d \) of a formula in \( \L^\infty \) is inductively
	defined as,
	\begin{gather*}
		\depth{p} = \depth{\E{a}{d}} = \depth{\P{a}{d}}  = 0 \qquad \quad
		\depth{\neg \varphi} = \depth{\varphi} \qquad \quad
		\depth{[\varphi] \psi} = \depth{\varphi} + \depth{\psi} \\
		\depth{\varphi \wedge \psi} = \max \left( \depth{\varphi}, \depth{\psi} \right) \qquad \quad
		\depth{K_a \varphi} = 1 + \depth{\varphi} \qquad \quad
		\depth{K^\infty_a \varphi} = 1 + \depth{\varphi}.
	\end{gather*}
\end{defi}

We defer treatment of public announcements \( [\varphi] \psi \) to
Section~\ref{sec:dpal}.
We work in the framework of \textbf{S5}~\cite{DBLP:books/mit/FHMV1995}, assuming
each agent's knowledge relation to be an equivalence relation, unless otherwise
specified---however, our work could be adapted to weaker epistemic
logics~\cite{DBLP:books/mit/FHMV1995} by removing the appropriate axioms.
\begin{defi}\label{defi:dbel}
	A model in \DBEL{} is defined as a tuple \( M = (\states, \sim, V, d) \)
	where \( \states \) is a set of states, \( V: \states \to 2^{\atoms} \)
	is the valuation function for atoms and \( d: \agents \times \states \to
	\N \) is a depth assignment function.
	For each agent \( a \), \( \sim_a \) is an equivalence relation on \(
	\states \) modeling which states are seen as equivalent in the eyes of \(
	a \).
	The semantics are inductively defined over \( \H^\infty \) by,
\begin{gather*}
	(M, s) \models p \iff p \in V(s)
	\qquad \quad
	(M, s) \models E_a^d \iff d(a, s) = d
	\qquad \quad
      	(M, s) \models \P{a}{d} \iff d(a, s) \geq d \\
	(M, s) \models \neg \varphi \iff (M, s) \not\models \varphi
	\qquad \quad
	(M, s) \models \varphi \wedge \psi \iff (M, s) \models \varphi \text{ and
	} (M, s) \models \psi \\
	(M, s) \models K^\infty_a \varphi \iff (\forall s', \; s \sim_a s'
	\implies (M, s') \models \varphi)
	\qquad
	(M, s) \models K_a \varphi \iff (M, s) \models \Paphi \wedge K^\infty_a
	\varphi.
\end{gather*}
\end{defi}
Note that this definition does not require agents to have any (exact or
approximate) knowledge of their own depth.
On the other hand, it does not prohibit agents agents from having exact knowledge
of their own depths, for instance we could model each agent carrying out some
`meta-reasoning' about its own depth~\footnote{For instance deducing \( \Paphi \)
from the fact that it knows \( \varphi \), or deducing \( \neg \P{a}{n} \) from
the fact that it does not know \( K^{n}_a \top \).} leading each agent to know
its own depth exactly.
These models are a subset of the class of the models we consider, which we study
in more detail in Section~\ref{sub:unam}.

\begin{table}
\centering
 \begin{tabular}{|| r | l ||}
 \hline
 All propositional tautologies & \( p \to p \), \textit{etc}. \\
 Deduction & \( (K_a \varphi \wedge K_a (\varphi \to \psi)) \to K_a \psi \) \\
 \hline
 Truth & \( K_a \varphi \to \varphi \) \\
 Positive introspection & \( ( K_a \varphi \wedge \P{a}{\depth{\varphi} + 1} ) \to K_a (\P{a}{\depth{\varphi}} \to K_a \varphi) \) \\
 Negative introspection & \( ( \neg K_a \varphi \wedge \P{a}{\depth{\varphi} + 1} ) \to K_a \neg K_a \varphi \) \\
 \hline
 Depth monotonicity & \( \P{a}{d} \to \P{a}{d-1} \) \\
 Exact depths & \( \P{a}{d} \leftrightarrow \neg (E_a^0 \vee \cdots \vee E_a^{d-1})  \) \\
 Unique depth & \( \neg(E_a^{d_1} \wedge E_a^{d_2}) \) for \( d_1 \neq d_2 \) \\
 \hline
 Depth deduction & \( K_a \varphi \to \P{a}{\depth{\varphi}} \) \\
 \hline
 \hline
 \textit{Modus ponens} & From \( \varphi \) and \( \varphi \to \psi \), deduce \(
 \psi \) \\
 Necessitation & From \( \varphi \) deduce \( \P{a}{\depth{\varphi}} \to
 K_a \varphi \) \\
 \hline
 \end{tabular}
 \caption{Sound and complete axiomatization for \DBEL{} over \( \H \).}\label{tab:axb}
\end{table}

As \DBEL{} is an extension of \textbf{S5} up to renaming of the modal operators,
one can expect for it to have a similar axiomatization: one new axiom is needed
to axiomatize \( K_a \) and three others for depth atoms.
\begin{thm}\label{thm:axb}
	Axiomatization from Table~\ref{tab:axb} is sound and complete with
	respect to \DBEL{} over \( \H \).
\end{thm}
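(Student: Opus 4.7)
My plan is to verify soundness by direct inspection of each axiom, and to prove completeness via a Hintikka-style canonical-model construction whose states are enriched maximal consistent sets carrying a definite depth assignment per agent.

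Soundness is routine. Propositional tautologies, modus ponens, the K-axiom, and Truth reduce to standard S5 reasoning once one unfolds $K_a \varphi$ as $\P{a}{\d{\varphi}} \wedge K^\infty_a \varphi$, from which Depth Deduction is immediate. Depth Monotonicity, Exact Depths, and Unique Depth are direct consequences of $d$ being a total function $\agents \times \states \to \N$. The two introspection axioms' antecedent $\P{a}{\d{\varphi}+1}$ supplies exactly the depth $1 + \d{\varphi}$ needed to evaluate the outer $K_a$, after which transitivity or symmetry of $\sim_a$ yields the conclusion in the usual way. Modified Necessitation is sound because a valid $\varphi$ holds in every state, so $\P{a}{\d{\varphi}}$ at a state already forces $K_a \varphi$.

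For completeness, given a consistent $\varphi_0 \in \H$, I apply the strengthened Lindenbaum lemma of Appendix~\ref{app:lind} to extend $\{\varphi_0\}$ to a maximal consistent set $\Gamma_0$ that contains exactly one depth atom $E_a^{d_a}$ per agent $a$. Let $\states$ be the collection of all such enriched MCSs, let $V(\Gamma) = \Gamma \cap \atoms$, let $d(a,\Gamma)$ be the unique $d$ with $E_a^d \in \Gamma$, and declare $\Gamma \sim_a \Gamma'$ iff $\{\psi : K_a \psi \in \Gamma\} = \{\psi : K_a \psi \in \Gamma'\}$, which is manifestly an equivalence relation. The truth lemma $(M_c, \Gamma) \models \varphi \iff \varphi \in \Gamma$ then proceeds by induction on $\varphi$, with atomic, Boolean, and depth-atom cases immediate from the definitions plus Exact Depths and Unique Depth. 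For the $K_a \varphi$ case I split on whether $d(a,\Gamma) \geq \d{\varphi}$: if not, the contrapositive of Depth Deduction forces $K_a \varphi \notin \Gamma$ while the semantics also rules out $(M_c,\Gamma) \models K_a \varphi$; if so, the forward direction uses the Truth axiom and the definition of $\sim_a$, and for the converse I build a $\sim_a$-neighbor $\Gamma' \ni \neg\varphi$ by checking consistency of $\{K_a \psi : K_a \psi \in \Gamma\} \cup \{\neg K_a \chi : K_a \chi \notin \Gamma\} \cup \{\neg\varphi\}$ (any finite inconsistency being collapsed to $K_a \varphi \in \Gamma$ through Modified Necessitation, the modified introspection axioms, Depth Deduction, and the K-axiom) and extending via the strengthened Lindenbaum lemma.

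The hard part is the depth bookkeeping in this inconsistency derivation: the modified introspection axioms only fire when $\P{a}{\d{\psi}+1}$ holds, so the usual S5 trick of pulling $\neg K_a \chi$ inside a $K_a$ via $K_a \neg K_a \chi$ is unavailable whenever $\d{\chi} \geq d(a,\Gamma)$. I expect to handle this by observing that such high-depth premises are redundant, since by Depth Deduction $K_a \chi$ cannot belong to any $\Gamma' \in \states$ with $d(a,\Gamma') < \d{\chi}$, reducing the consistency argument to the low-depth regime where the modified introspection axioms do apply. Coherently propagating depth atoms from $\Gamma$ to $\Gamma'$ throughout this process is likewise crucial and is again precisely what the strengthened Lindenbaum lemma of Appendix~\ref{app:lind} secures.
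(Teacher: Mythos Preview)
Your canonical-model construction has a genuine gap: defining $\Gamma \sim_a \Gamma'$ by equality of the sets $\{\psi : K_a\psi \in \Gamma\}$ forces every $\sim_a$-class in the canonical model to have constant $a$-depth. Indeed, for any depth-$D$ tautology $\tau$ (say $\tau = \chi \vee \neg\chi$ with $\d{\chi}=D$) the axioms derive $K_a\tau \leftrightarrow \P{a}{D}$: Necessitation gives $\P{a}{D}\to K_a\tau$ and Depth Deduction gives the converse. Hence two maximal consistent sets with the same $K_a$-set agree on all $\P{a}{D}$ and therefore on $d(a,\cdot)$. But then the truth lemma fails for consistent formulas asserting depth ambiguity. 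Take $\varphi_0 = \E{a}{0} \wedge \neg K_a \E{a}{0}$, which is satisfiable (two $\sim_a$-related states with $a$-depths $0$ and $1$) and hence consistent, and let $\Gamma$ extend $\{\varphi_0\}$. Your existence lemma needs a neighbour $\Gamma' \sim_a \Gamma$ with $\neg\E{a}{0} \in \Gamma'$; but any such $\Gamma'$ has $\P{a}{1} \in \Gamma'$ by Exact Depths, hence $K_a\tau_1 \in \Gamma'$ for depth-$1$ tautologies $\tau_1$, whereas $\neg\P{a}{1} \in \Gamma$ forces $\neg K_a\tau_1 \in \Gamma$, so the $K_a$-sets differ and no such $\Gamma'$ exists. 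Equivalently, your set $\{K_a\psi : K_a\psi\in\Gamma\} \cup \{\neg K_a\chi : K_a\chi\notin\Gamma\} \cup \{\neg\E{a}{0}\}$ is inconsistent, yet $K_a\E{a}{0} \notin \Gamma$, so the implication you rely on (``finite inconsistency collapses to $K_a\varphi \in \Gamma$'') is simply false here. Your ``redundancy'' patch addresses the wrong direction: it handles $\neg K_a\chi$ with $\d{\chi}$ too large for $\Gamma$, but the obstruction above is that the required $\Gamma'$ must have \emph{strictly larger} $a$-depth than $\Gamma$.

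The paper sidesteps this entirely: it does not build a canonical model for Table~\ref{tab:axb} at all. Instead it introduces an auxiliary axiomatization (Table~\ref{tab:axb2}) over $\H^\infty$ in which the genuine \textbf{S5} modality $K^\infty_a$ is primitive and $K_a\varphi$ is governed by the single axiom $K_a\varphi \leftrightarrow \Paphi \wedge K^\infty_a\varphi$. Completeness for Table~\ref{tab:axb2} follows from the standard \textbf{S5} canonical model with $\sim_a$ defined via $K^\infty_a$-sets (which carry no depth information, so depth ambiguity survives), the strengthened \textsc{Lindenbaum} lemma being used only to equip each state with a well-defined depth function. Theorem~\ref{thm:axb} is then obtained by a syntactic proof-transformation argument showing that the two axiomatizations derive exactly the same $\H$-formulas, essentially by replacing every $K^\infty_a\varphi$ by $\Paphi \to K_a\varphi$ and checking that each axiom and rule survives the substitution.
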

\begin{proof}
	Rather than directly showing soundness and completeness, we show it is
	equivalent to the axiomatization of Table~\ref{tab:axb2} in
	Appendix~\ref{app:ax} on the
	fragment \( \H \), which is shown to be sound and complete over \(
	\H^\infty \) in Theorem~\ref{thm:altaxb}.
	We begin by proving any proposition in \( \H \) that can be shown using
	Table~\ref{tab:axb} can be shown using Table~\ref{tab:axb2} and
	then that any proof of a formula in \( \H \) using the axioms in
	Table~\ref{tab:axb2} can be shown using those in Table~\ref{tab:axb}.

	For the first direction, we prove that the axioms in Table~\ref{tab:axb}
	can be proven using those from Table~\ref{tab:axb2}.
Most of them are immediate applications of bounded knowledge within the axioms of
Table~\ref{tab:axb2}, along with tautologies when necessary.
For positive and negative introspection, see equation~\eqref{eq:posinscompl}
below in the proof of the opposite direction of the equivalence.
We prove the least evident axiom, the deduction axiom, here as an example:
\begingroup
\allowdisplaybreaks
\begin{align}
 \text{Deduction} \quad
        & (K^\infty_a \varphi \wedge K^\infty_a (\varphi \to \psi)) \to
        K^\infty_a \psi \label{eq:st1} \\
 \text{Bounded knowledge in~\eqref{eq:st1}} \quad
        & (K^\infty_a \varphi \wedge K^\infty_a (\varphi \to \psi)) \to
        \P{a}{\depth{\psi}} \to K_a \psi \label{eq:st2} \\
 \text{Tautology in~\eqref{eq:st2}} \quad
        & \P{a}{\max(\depth{\varphi}, \depth{\psi})} \to K^\infty_a \varphi \to
        K^\infty_a (\varphi \to \psi) \to \P{a}{\depth{\psi}} \to K_a \psi
        \label{eq:st3} \\
 \text{Repeated depth consistency} \quad
        & \P{a}{\max(\depth{\varphi}, \depth{\psi})} \to (\Paphi \wedge
        \P{a}{\depth{\psi}}) \label{eq:st4} \\
 \text{Bounded knowledge and~\eqref{eq:st3} and~\eqref{eq:st4}} \quad
        & \P{a}{\max(\depth{\varphi}, \depth{\psi})} \to K_a \varphi \to
        K_a (\varphi \to \psi) \to K_a \psi \label{eq:st5} \\
 \text{Bounded knowledge in~\eqref{eq:st5}} \quad
        & K_a \varphi \to K_a (\varphi \to \psi) \to K_a \psi. \notag
\end{align}
\endgroup

	In the other direction, we will show by induction over a proof of a valid
	formula in \( \H \) using Table~\ref{tab:axb2} that it can be transformed
	into a proof with the same conclusion, using only axioms from
	Table~\ref{tab:axb}.
The transformation of a proof in the first axiomatization is as follows,
\begin{itemize}
\item If an item of the proof is a propositional tautology, replace all \(
K^\infty_a \varphi \) subformulas by \( \Paphi \to K_a \varphi \), clearly the
tautology still holds and it is in Table~\ref{tab:axb}.
\item If an item is an instance of the bounded knowledge axiom, replace it with
the formula \\
\(
K_a \varphi \leftrightarrow (\Paphi \wedge \Paphi \to K_a \varphi)
\)
which is a consequence of depth deduction and a tautology
(and therefore can be added to the proof with two extra steps).
\item If it uses any of the other axioms, replace it with the corresponding axiom
(with the same name) from Table~\ref{tab:axb}.
\end{itemize}

We now have a sequence that has the same conclusion (since the conclusion is in
\( \H \)) and only uses axioms from Table~\ref{tab:axb}.
The last thing to show for this to be a proof in this axiomatization is that all
applications of \textit{modus ponens} and necessitation are still correct within
this sequence.
To this end, we show by induction that each step of the sequence is the same as
the original proof where every \( K^\infty_a \varphi \) subformula in each step
has been replaced by \( \Paphi \to K_a \varphi \).

First, note that this is the case for the two first bullet points of our
transformation rules above.
This is also true of each axiom in the table after our transformation: a proof
similar to the one in equation~\eqref{eq:st1} will yield the equivalence for
deduction, the only remaining non-trivial cases are positive and negative
introspection.
For positive introspection, performing the substitution yields,
\begin{equation}\label{eq:posinscompl}
 (\Paphi \to K_a \varphi) \to \P{a}{\depth{\varphi} + 1} \to K_a(\Paphi \to K_a
 \varphi).
\end{equation}
Through application of a tautology and the depth monotonicity axiom we find it to
be equivalent to,
\(
 \P{a}{\depth{\varphi} + 1} \to K_a \varphi \to K_a(\Paphi \to K_a
 \varphi).
\)
Therefore, up to adding steps to the proof and using tautologies, we can prove
the axiom from Table~\ref{tab:axb} from the axiom in Table~\ref{tab:axb2} after
the substitution.
The same can be said of negative introspection through a similar transformation.

Finally, since \textit{modus ponens} and necessitation also maintain the property
of replacing \( K^\infty_a \varphi \) subformulas in each step by \( \Paphi \to
K_a \varphi \), it is true that the transformed proof is indeed a proof of the
same conclusion in Table~\ref{tab:axb}'s axiomatization.
\end{proof}

\section{Depth-bounded public announcement logic}\label{sec:dpal}
We next present how to incorporate depth announcements in \DBEL{}, which are a
key challenge in defining depth-bounded public announcement logic (DPAL).
Recall the axiom~\eqref{eq:intro} of public announcement logic, \( [\varphi] K_a
\psi \leftrightarrow (\varphi \to K_a [\varphi] \psi) \).
For the right-hand side to be true, agent \( a \) must be of depth \( d([\varphi]
\psi) = d(\varphi) + d(\psi) \) according to~\DBEL{}.
This suggests that an agent must ``consume'' \( \d{\varphi} \) of its depth every
time an announcement \( \varphi \) is made, meaning that an agent's depth behaves
like a depth budget with respect to public announcements.

Moreover, to model that some agents might be too shallow for the announcement
\( \varphi \), each possible world is duplicated in a \textit{negative} version
where the announcement has not taken place and a \textit{positive} version where
the announcement takes place in the same way as in PAL\@.
Agents who are not deep enough to perceive the announcement see the negative and
positive version of the world as equivalent.
\begin{defi}\label{defi:dpal}
	Models in \textbf{depth-bounded public announcement logic} (DPAL) are
	defined the same way as in \DBEL{} and the semantics is extended to \(
	\L^\infty \) by
	\( (M, s) \models [\varphi] \psi \iff ((M, s) \models \varphi \implies \\
	(M \mid \varphi, (1, s)) \models \psi) \),
	where we define \( M \mid \varphi \) to be the model \( (\states',
	\sim', V', d') \), where,
\begingroup
\allowdisplaybreaks
\begin{align}
\states' & = (\{ 0 \} \times \states) \cup \{ (1, s), \; s \in \states, \; (M, s)
	\models \varphi \} \notag \\
	\sim'_a & \text{ is the transitive symmetric closure of } R_a \text{ such that},
	\notag \\
	(i, s) \, R_a \, (i, s') & \iff s \sim_a s' \qquad \text{ for } i=0,1 \notag \\
(1, s) \, R_a \, (0, s) & \iff (M, s) \not\models \P{a}{\depth{\varphi}} \notag \\
V'((i, s)) & = V(s) \qquad \qquad \; \, \text{ for } i=0,1 \notag \\
d'(a, (0, s)) & = d(a, s) \notag \\
d'(a, (1, s)) & = \begin{cases}
d(a, s) \quad \text{if } d(a, s) < \depth{\varphi} \\
d(a, s) - \depth{\varphi} \quad \text{otherwise.}
\end{cases}\label{eq:s3depth}
\end{align}
\endgroup
\end{defi}

Since public announcements are no longer unconditionally and universally heard by
all agents, we revisit the axiom~\eqref{eq:intro} in~\DPAL{}.
The determining factor is \textbf{depth ambiguity}: agents that are unsure about
their own depth introduce uncertainty about which agents have perceived the
announcement.

\subsection{Unambiguous depths setting}\label{sub:unam}
A model verifies the \textbf{unambiguous depths} setting whenever each agent
knows its own depth exactly:
\begin{equation}\label{eq:unam}
	\forall a, s, s', \quad s \sim_a s' \implies d(a, s) = d(a, s').
\end{equation}

The proof of the following theorem is given as Proposition~\ref{prop:col3app}
in Appendix~\ref{app:props}.
\begin{thm}\label{thm:col3}
	For all \( \varphi \in \L^\infty \), the following two properties,
	respectively called \textbf{knowledge preservation} and
	\textbf{traditional announcements}, are valid in~\DPAL{} in the
	unambiguous depths setting,
\begin{alignat}{3}
& \forall \psi \in \L_a^\infty, \; \; && \neg \P{a}{\depth{\varphi}} && \to
	\left( [\varphi] K_a \psi \leftrightarrow (\varphi \to K_a \psi) \right)
	\tag{KP} \label{eq:KP} \\
& \forall \psi \in \L^\infty, \; \; && \P{a}{\depth{\varphi}} && \to
	\left( [\varphi] K_a \psi \leftrightarrow \left( \varphi \to K_a
	[\varphi] \psi \right) \right), \tag{TA} \label{eq:TA0}
\end{alignat}
	where \( \L_a^\infty \) is the fragment of \( \L^\infty \) without depth atoms or modal operators for agents other than \( a \).
\end{thm}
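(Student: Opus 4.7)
The plan is to prove both biconditionals by unfolding the semantics of $[\varphi] K_a \psi$ via Definition~\ref{defi:dpal} and then analyzing the equivalence class of $(1, s)$ under $\sim'_a$ in $M \mid \varphi$. Assumption~\eqref{eq:unam} is what makes the case split clean: it implies that $\P{a}{\depth{\varphi}}$ has a uniform truth value across each $\sim_a$-class, so the hypotheses on the left of~\TAz{} and~\KI{} propagate from $s$ to every world $s'$ with $s \sim_a s'$. In both cases we may also assume $(M, s) \models \varphi$, as otherwise both sides of the biconditional are vacuously true.

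For~\TAz{}, with $\P{a}{\depth{\varphi}}$ holding throughout $a$'s $\sim_a$-class, no edge $(1, s') \, R_a \, (0, s')$ appears inside the $\sim'_a$-class of $(1, s)$, which therefore equals $\{(1, s') : s \sim_a s', \; (M, s') \models \varphi\}$. This is exactly the classical PAL update, so the standard proof of~\eqref{eq:intro} carries over by induction on $\psi$. The new ingredient is the depth bookkeeping from~\eqref{eq:s3depth}: $d'(a, (1, s)) = d(a, s) - \depth{\varphi}$, so the condition $d'(a, (1, s)) \geq \depth{\psi}$ (the $\P{a}{\depth{\psi}}$ part of $K_a \psi$ evaluated in $M \mid \varphi$) matches $d(a, s) \geq \depth{\varphi} + \depth{\psi} = \depth{[\varphi]\psi}$ (the $\P{a}{\cdot}$ part of $K_a [\varphi]\psi$ evaluated in $M$).

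For~\KI{}, with $\neg \P{a}{\depth{\varphi}}$ holding throughout $a$'s class, the edge $(1, s') \, R_a \, (0, s')$ is present whenever $(1, s')$ exists, so the $\sim'_a$-class of $(1, s)$ decomposes as $\{(1, s') : s \sim_a s', \; (M, s') \models \varphi\} \cup \{(0, s') : s \sim_a s'\}$ and depths are preserved: $d'(a, (i, s')) = d(a, s')$. The target reduces to a semantic preservation lemma: for every $\psi \in \L_a^\infty$, every $s' $ with $ s \sim_a s'$, and every $i \in \{0, 1\}$ with $(i, s') \in \states'$, one has $(M \mid \varphi, (i, s')) \models \psi$ iff $(M, s') \models \psi$. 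This lemma is proved by induction on $\psi$: atoms and Boolean cases are routine since $\L_a^\infty$ excludes depth atoms and modalities for other agents; the $K_a$ and $K^\infty_a$ cases use the structural identification above together with the preserved depths.

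The main obstacle is the nested announcement case $[\chi] \psi' \in \L_a^\infty$ inside the preservation lemma, as one must then compare the doubly-updated model $(M \mid \varphi) \mid \chi$ with $M \mid \chi$. The restriction to $\L_a^\infty$ (which forces both $\chi$ and $\psi'$ into $\L_a^\infty$) keeps the argument tractable: only agent $a$'s relation matters, and the inductive hypothesis applied to $\chi$ aligns the states in $\states'$ at which the second announcement succeeds with those in $\states$ where it succeeds, via the projection $(i, (j, s')) \mapsto (j, s')$. I would organize the whole argument as a simultaneous induction on the structural complexity of $\varphi$ and $\psi$, so that both~\TAz{} and~\KI{} (and thus the preservation lemma) for strictly smaller formulas are available inside the inductive step.
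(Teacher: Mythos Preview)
Your overall strategy matches the paper's: both~\TAz{} and~\KI{} are handled by analyzing the $\sim'_a$-class of $(1,s)$ under the respective depth hypothesis (propagated across the class via~\eqref{eq:unam}), and~\KI{} is reduced to a semantic preservation lemma proved by induction on $\psi$. Two points deserve correction.

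First, \TAz{} needs no induction on $\psi$. Once the $\sim'_a$-class of $(1,s)$ is identified as $\{(1,s') : s' \sim_a s,\ (M,s') \models \varphi\}$ and the depth arithmetic $d'(a,(1,s)) = d(a,s) - \d{\varphi}$ is recorded, both $[\varphi]K_a\psi$ and $\varphi \to K_a[\varphi]\psi$ unfold to literally the same condition on the $s' \sim_a s$ with $(M,s') \models \varphi$; this is how the paper argues.

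Second, and more substantively, the nested-announcement case of your preservation lemma has a gap. After using the inductive hypothesis on $\chi$ to align where the inner announcement succeeds, you still need
\[
((M\mid\varphi)\mid\chi,\,(1,(i,s'))) \models \psi' \iff (M\mid\chi,\,(1,s')) \models \psi'.
\]
This is \emph{not} an instance of your preservation lemma, which compares $M\mid\varphi$ to $M$, not $(M\mid\varphi)\mid\chi$ to $M\mid\chi$; and your proposed simultaneous induction on $\varphi$ and $\psi$ does not rescue it, because $\varphi$ is fixed throughout (there is nothing smaller to recurse on) while the hypothesis for the smaller $\psi'$ is still stated for the wrong pair of models. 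The paper's fix is to strengthen the preservation lemma so that it quantifies over an arbitrary finite sequence of follow-up announcements $\psi_1,\ldots,\psi_n$ applied after $\varphi$, asserting (under suitable depth side-conditions)
\[
(M\mid\psi_1\mid\cdots\mid\psi_n,\,1_n(s')) \models \psi \iff (M\mid\varphi\mid\psi_1\mid\cdots\mid\psi_n,\,1_n((i,s'))) \models \psi.
\]
With this loaded hypothesis the case $\psi = [\chi]\psi'$ is immediate: apply the hypothesis for $\psi'$ with the sequence extended by $\psi_{n+1}=\chi$.
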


\paragraph{Discussion}
Knowledge preservation~\KI{} means that an agent who is not deep enough to
perceive an announcement \( \varphi \) must not change its knowledge of a formula
\( \psi \).
However, such a property could not be true of all formulas \( \psi \), for
instance if \( \psi = K_a K_b p \) but \( b \) is deep enough to perceive \(
\varphi \), then the depth adjustment formula~\eqref{eq:s3depth} could mean that
\( b \)'s depth is now \( 0 \), making \( \psi \) no longer hold.
Even when \( a \) is certain about \( b \)'s depth, its uncertainty about what
the announcement entails could also mean that formulas such as \( \neg K_b p \)
could no longer be true if \( \P{b}{\d{\varphi}} \) and \( \varphi \to p \) in
the model.
This demonstrates that in depth-bounded logics public
announcements must introduce uncertainty: if \( a \) is unsure what \( b \) has
perceived, it can no longer hold any certainties about what \( b \) does not
know.
This is not the case in PAL since all agents perceive all announcements.
Our treatment of the depth-ambiguous case in Section~\ref{sec:amb} generalizes~\KI{} to obtain a property~\KIp{} that holds on all formulas in \( \L^\infty \).

Traditional announcements~\TAz{} ensures that announcements behave the same as in
PAL when the agent is deep enough for the announcement.
The caveats from the discussion of~\KI{} no longer apply here, as any \( K_b \)
operator that appears in \( \psi \) will still appear after the same public
announcement operator, meaning that depth variations or knowledge variations are
accounted for.

\subsection{Ambiguous depths setting}\label{sec:amb}
We now abandon the depth unambiguity assumption from equation~\eqref{eq:unam},
and explore how properties~\KI{} and~\TAz{} generalize to settings without depth
unambiguity.
We find a condition that ensures that sufficient knowledge about
other agents' depths is given to \( a \) in order to maintain its recursive
knowledge about other agents.
The proof to the following theorem is given as Proposition~\ref{prop:propsstrapp}
in Appendix~\ref{app:props}.
\begin{thm}\label{thm:col3str}
For any \( \varphi \in \L^\infty \), let \( \F: \L^\infty \to \L^\infty \) be
inductively defined as,
\begin{gather*}
	\F(p) = \F(\E{a}{d}) = \F(\P{a}{d}) = \top \qquad \quad \F(\neg \psi) = \F(\psi) \qquad \quad
	\F(\psi \wedge \chi) = \F(\psi) \wedge \F(\chi) \\
	\F(K_a \psi) = \neg K^\infty_a(\varphi \to \Paphi) \wedge
	K^\infty_a (\varphi \to \neg \Paphi \vee \P{a}{\d{\varphi} + \d{\psi}})
	\wedge K^\infty_a \F(\psi) \\
        \F(K^\infty_a \psi) = \neg K^\infty_a(\varphi \to \Paphi) \wedge
        K^\infty_a \F(\psi) \qquad \qquad \qquad \qquad
        \F([\psi_1] \psi_2) = \F(\psi_1) \wedge \F(\psi_2).
\end{gather*}

	For all \( \varphi \in \L^\infty \), the following two properties are
	valid in~\DPAL{},
\begin{alignat}{3}
	& \forall \psi \in \L^\infty, \; \; && \F(K_a \psi) && \to
	\left( [\varphi] K_a \psi \leftrightarrow (\varphi \to
	K_a \psi) \right) \tag{KP'} \label{eq:KP'} \\
	& \forall \psi \in \L^\infty, \; \; && K^\infty_a(\varphi \to \P{a}{\depth{\varphi}}) &&
	\to \left( [\varphi] K_a \psi \leftrightarrow \left(\varphi \to
	K_a [\varphi] \psi \right) \right).
	\tag{TA'} \label{eq:TA}
\end{alignat}
\end{thm}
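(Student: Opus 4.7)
The plan is to prove~\KIp{} and~\TA{} by a joint structural induction on \( \psi \), underpinned by a supporting lemma that tracks truth-preservation between \( M \) and \( M \mid \varphi \) at both levels of the updated state space.

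For~\TA{}, I would start by unfolding \( (M, s) \models [\varphi] K_a \psi \) at a state where \( \varphi \) holds. The hypothesis \( K^\infty_a(\varphi \to \Paphi) \) rules out every link \( (1, u) \sim'_a (0, u) \) from firing inside the \( \sim_a \)-class of \( s \), since any \( u \sim_a s \) with \( (M, u) \models \varphi \) already satisfies \( \Paphi \). Hence the \( \sim'_a \)-equivalence class of \( (1, s) \) in \( M \mid \varphi \) is exactly \( \{(1, t) : t \sim_a s,\ (M, t) \models \varphi\} \), and the depth rule~\eqref{eq:s3depth} gives \( d'(a, (1, s)) = d(a, s) - \depth{\varphi} \). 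From there the biconditional follows by a computation analogous to the classical public announcement case: the depth requirement \( d'(a, (1, s)) \geq \depth{\psi} \) matches \( d(a, s) \geq \depth{\varphi} + \depth{\psi} = \depth{[\varphi]\psi} \), and the quantification over \( \sim'_a \)-neighbours of \( (1, s) \) unwinds through the definition of \( [\varphi]\psi \) into a quantification over \( \sim_a \)-neighbours of \( s \) gated by \( \varphi \), matching \( K_a [\varphi] \psi \) at \( s \).

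For~\KIp{}, I would decompose \( \F(K_a \psi) \) into its three conjuncts and use each for a different purpose. The first conjunct \( \neg K^\infty_a(\varphi \to \Paphi) \) supplies a witness \( s^* \sim_a s \) with \( (M, s^*) \models \varphi \) and \( (M, s^*) \not\models \Paphi \); this activates the link \( (1, s^*) \sim'_a (0, s^*) \) and forces the \( \sim'_a \)-class of \( (1, s) \) to be \( \{(0, t) : t \sim_a s\} \cup \{(1, t) : t \sim_a s,\ (M, t) \models \varphi\} \). The second conjunct, applied at each \( t \sim_a s \) satisfying \( \varphi \), yields the dichotomy \( d(a, t) < \depth{\varphi} \) or \( d(a, t) \geq \depth{\varphi} + \depth{\psi} \); in both sub-cases \( d(a, t) \geq \depth{\psi} \) iff \( d'(a, (1, t)) \geq \depth{\psi} \), so the depth thresholds on the two sides of the biconditional agree. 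The third conjunct \( K^\infty_a \F(\psi) \) propagates \( \F(\psi) \) to every \( \sim_a \)-neighbour of \( s \), making the inductive hypothesis available at every point in the above \( \sim'_a \)-class.

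The supporting lemma, proved by the same induction, states that under \( \F(\psi) \) at \( t \) the truth of \( \psi \) agrees among \( (M, t) \), \( (M \mid \varphi, (0, t)) \), and \( (M \mid \varphi, (1, t)) \) (the last whenever \( (M, t) \models \varphi \)). Atomic and depth-atom cases are immediate at level~\( 0 \) because \( V' \) and \( d' \) coincide with \( V \) and \( d \) there, and the Boolean and announcement cases reduce via the recursive shape of \( \F \). The cases \( K_b \chi \) and \( K^\infty_b \chi \) feed the inductive hypothesis on \( \chi \) into the same class-characterization argument used for the outer \( K_a \), now carried out at agent \( b \). I expect the main obstacle to sit in that \( K_b \)-step when \( b \neq a \): the \( \sim'_b \)-equivalence classes in \( M \mid \varphi \) can differ substantially from their \( \sim_b \)-counterparts in \( M \) because of the level-crossing links, so aligning the two requires threading the recursive conjuncts of \( \F \) carefully through the sub-formulas. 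The interaction between the depth-budget semantics of~\eqref{eq:s3depth} and the nesting of modalities is the conceptual heart of the argument, and the definition of \( \F(K_a \psi) \) is tuned precisely to keep that interaction under control.
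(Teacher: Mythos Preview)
Your overall architecture matches the paper's: (TA') is handled exactly as you describe, and for (KP') the paper also proves a two-level truth-preservation lemma and reads the three conjuncts of $\F(K_a\psi)$ the same way you do (witness for the cross-level link, depth dichotomy, propagation of $\F(\psi)$). Your diagnosis of the $K_b$-step as the main obstacle is misplaced, though: once $\F(K_b\chi)$ holds at $s'$, its first conjunct collapses the $\sim'_b$-class of $(1,s')$ to the full two-level set, its second aligns the depth thresholds for $b$, and its third feeds the induction on $\chi$; this is literally the outer-$K_a$ argument relocated to $b$, and the paper dispatches it in one short paragraph.

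The genuine gap is the announcement case, which you wave through as reducing ``via the recursive shape of $\F$''. For $\psi=[\psi_1]\psi_2$, your lemma at $t$ aligns $(M,t)\models\psi_1$ with $(M\mid\varphi,(j,t))\models\psi_1$, but the inner comparison you then need is between $(M\mid\psi_1,(1,t))\models\psi_2$ and $((M\mid\varphi)\mid\psi_1,(1,(j,t)))\models\psi_2$, and your induction hypothesis says nothing about those two models. The paper repairs this by strengthening the lemma to an arbitrary prefix of announcements $\psi_1,\ldots,\psi_n$: under the extra hypotheses $\P{a}{\depth{\psi_1}+\cdots+\depth{\psi_n}+\depth{\psi}}$ and $\neg\Paphi$ at $s'$, truth of $\psi$ agrees between $M\mid\psi_1\mid\cdots\mid\psi_n$ at $1_n(s')$ and $M\mid\varphi\mid\psi_1\mid\cdots\mid\psi_n$ at $1_n((j,s'))$; the announcement step then just shifts $\psi_1$ from the formula into the prefix. (That added $\neg\Paphi$ hypothesis is also what the paper uses to make the depth-atom case go through at level~$1$, which you flag as immediate only at level~$0$.)
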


\subsection{Alternate treatments of model updates for public announcements}\label{subsec:s12}
One question is whether using a definition of public announcements closer to~PAL
would produce a version of the above axioms closer to~\eqref{eq:intro}.
Eager depth-bounded public announcement logic (EDPAL) below unconditionally
decrements the depth value of all agents after public announcements.
\begin{defi}[\TSo{}]\label{defi:sem1}
	\TSo{} extends the \DBEL{} semantics to include public announcements by
	defining
	\( (M, s) \models [\varphi] \psi \iff ((M, s) \models \varphi
	\implies (M \mid \varphi, s) \models \psi) \),
	where \( M \mid \varphi \) is the model \( (\states', \sim', V, d') \) in
	which \( \states' = \{ s \in \states, \; (M, s) \models \varphi \} \), \(
	\sim'_a \) is the restriction of \( \sim_a \) to \( \states' \), \( d'(a,
	s) = d(a, s) - \depth{\varphi} \), and $d$ may take values in \( \Z \).
\end{defi}
\TSo{} has a sound and complete
axiomatization based on the axiomatization of~\DBEL{} (Theorem~\ref{thm:ax1}),
which also allows us to prove the complexity result of Theorem~\ref{thm:sat}.

However, another consequence of its definition is that excessive public
announcements in~\TSo{} can lead an agent to a state in which it cannot reason anymore, as
it has consumed its entire depth budget.
\begin{prop}[Amnesia]\label{prop:amnesia}
In~\TSo{}, the formula \( \neg \P{a}{\depth{\varphi}} \to [\varphi] \neg
K_a \psi \) is valid for all \( \varphi \) and \( \psi \).
\end{prop}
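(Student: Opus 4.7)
The plan is to unfold the \TSo{} semantics directly and observe that when \( d(a, s) < \depth{\varphi} \), the depth update formula of Definition~\ref{defi:sem1} produces a strictly negative value \( d'(a, s) = d(a, s) - \depth{\varphi} < 0 \), which makes it impossible for \( a \) to satisfy the depth precondition \( \P{a}{\depth{\psi}} \) built into the operator \( K_a \psi \), regardless of what \( \psi \) is.

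Concretely, I would fix a pointed model \( (M, s) \) with \( (M, s) \models \neg \P{a}{\depth{\varphi}} \), i.e.\ \( d(a, s) < \depth{\varphi} \). To establish \( (M, s) \models [\varphi] \neg K_a \psi \), I unfold the public announcement semantics: if \( (M, s) \not\models \varphi \) the implication holds vacuously, and otherwise \( s \in \states' \) for \( M \mid \varphi \) and the goal reduces to showing \( (M \mid \varphi, s) \models \neg K_a \psi \). Applying the depth update from Definition~\ref{defi:sem1} gives \( d'(a, s) = d(a, s) - \depth{\varphi} < 0 \).

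Finally, by the \( K_a \) clause of Definition~\ref{defi:dbel}, having \( (M \mid \varphi, s) \models K_a \psi \) would require \( (M \mid \varphi, s) \models \P{a}{\depth{\psi}} \), i.e.\ \( d'(a, s) \geq \depth{\psi} \). Since modal depths live in \( \N \) we have \( \depth{\psi} \geq 0 > d'(a, s) \), a contradiction, so \( (M \mid \varphi, s) \models \neg K_a \psi \) as needed. The whole argument is essentially one line of depth arithmetic, and there is no real obstacle: the crux---that \TSo{} permits depths in \( \Z \), so that a single announcement can over-consume an agent's depth budget and leave it unable to satisfy the depth precondition of \emph{any} knowledge claim---is built directly into Definition~\ref{defi:sem1}.
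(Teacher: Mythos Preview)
Your proof is correct and follows essentially the same approach as the paper's: both case-split on whether \( \varphi \) holds at \( s \), and in the nontrivial case observe that the depth update in Definition~\ref{defi:sem1} forces \( d'(a,s) < 0 \), so the depth precondition \( \P{a}{\depth{\psi}} \) in the semantics of \( K_a \psi \) fails for every \( \psi \). Your version simply spells out the depth arithmetic a bit more explicitly.
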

\begin{proof}
	If \( (M,s) \not\models \varphi \) then the implicand is true.
	If \( (M,s) \models \varphi \wedge \neg \Paphi \) then the depth of \( a
	\) in \( (M \mid \varphi, s) \) will be at most \( -1 \), meaning that \(
	(M \mid \varphi, s) \not\models K_a \psi \) for all \( \psi \).
\end{proof}

In particular, for \( \psi = \top \) one notices that standard intuitions about
knowledge fail in~\TSo{}.
This property is undesirable: \textit{(i)} one may expect agents to maintain some
knowledge even after public announcements that they are not deep enough to
understand and \textit{(ii)} deeper agents should be able to continue to benefit
from the state of knowledge of shallower agents even after the shallower agents
have exceeded their depth.

One way to try to remedy this property is to change model updates in~\TSo{} to make agents perceive announcements only when they are deep enough to understand them.
The resulting asymmetric depth-bounded public announcement logic (ADPAL) removes
depth from an agent's budget only when it is deep enough for an announcement,
and only updates its equivalence relation in states where it is deep enough for
the announcement.
\begin{defi}[\TSt{}]\label{defi:sem2}
    \TSt{} extends the \DBEL{} semantics to include public announcements by defining
	\( (M, s) \models [\varphi] \psi \iff ((M, s) \models \varphi \implies (M
	\mid \varphi, s) \models \psi) \),
    where \( M \mid \varphi \) is the model \( (\states, \sim', V, d') \),
\begin{align*}
	s \not\sim'_a s' \iff s \not\sim_a s' \text{ or } & \begin{cases}
(M, s) \models \P{a}{\depth{\varphi}} \\
(M, s) \models \varphi \iff (M, s') \not\models \varphi,
\end{cases} \\
	d'(a, s) = & \begin{cases}
d(a, s) \quad \text{if } d(a, s) < \depth{\varphi} \\
d(a, s) - \depth{\varphi} \quad \text{otherwise.}
\end{cases}
\end{align*}
	The relations \( \sim_a \) are only assumed to be reflexive (as opposed
	to equivalence relations earlier).
\end{defi}

Unfortunately, in~\TSt{} an agent that is too shallow for an announcement could
still learn positive information that was learned by another agent who is deep
enough to perceive the announcement.
We call this property {\em knowledge leakage} as reflected in the following
proposition.
\begin{prop}[Knowledge leakage]\label{prop:leakage}
	\TSt{} does not verify the \( \to \) direction of~\KIp{}.
\end{prop}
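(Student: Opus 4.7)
The plan is to exhibit a small countermodel with two agents $a, b$ and a single atom $p$. I would take three states $s, s_1, u$ with $V(s) = V(s_1) = \{p\}$ and $V(u) = \emptyset$; let $\sim_a$ identify all three states into a single class and let $\sim_b$ be the identity relation. Assign depths $d(a, s) = d(a, u) = 1$, $d(a, s_1) = 0$, and $d(b, \cdot) = 1$. Take $\psi := p$ and $\varphi := K_b p$, so $\d{\varphi} = 1$; because $b$'s class is trivial at every state, $\varphi$ holds at $s$ and $s_1$ but fails at $u$.

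The central step is to verify at $s$ that $\F(K_a p)$, $\varphi$, $[\varphi] K_a p$ and $\neg K_a p$ all hold simultaneously, which refutes the forward direction of~\KIp{}. Because $\d{\psi} = 0$, the middle conjunct of $\F(K_a p)$ is $K^\infty_a$ of the tautology $\varphi \to \neg\P{a}{1} \vee \P{a}{1}$, and the third conjunct $K^\infty_a \F(p)$ collapses to $K^\infty_a \top$. The surviving first conjunct $\neg K^\infty_a(\varphi \to \P{a}{1})$ is witnessed by the accessible world $s_1$, at which $\varphi$ is true but $d(a, s_1) = 0$. Meanwhile $K_a p$ fails at $s$ because $u \sim_a s$ and $p \notin V(u)$.

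For $[\varphi] K_a p$, I would unfold the \TSt{} update of Definition~\ref{defi:sem2}. Since $d(a, s) = 1 \geq \d{\varphi}$, agent $a$ is deep at $s$, so $s \sim'_a s'$ holds iff $s \sim_a s'$ and $\varphi(s) \leftrightarrow \varphi(s')$; this removes $u$ and leaves the new class $\{s, s_1\}$, on which $p$ is universal. The updated depth $d'(a, s) = 0$ still satisfies $\P{a}{0}$, so $K_a p$ holds at $(M \mid \varphi, s)$, and since $\varphi$ is true at $s$, so is $[\varphi] K_a p$ at $(M,s)$.

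I expect the main obstacle to be tuning the depth profile of $a$ so that $\F(K_a \psi)$ is non-vacuous while $a$ is deep enough at $s$ for the \TSt{} update to reshape its equivalence class: the first conjunct of $\F$ demands an accessible world where $\varphi$ holds but $a$ is shallow, forcing ambiguous depth, yet the announcement only trims $a$'s class at $s$ if $a$ is deep there. Choosing $\psi$ propositional is what keeps the recursive $K^\infty_a \F(\psi)$ conjunct vacuous and avoids threading similar ambiguous-depth requirements for $b$ through every modal layer of $\psi$.
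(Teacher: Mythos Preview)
Your countermodel is correct, and it refutes the $\to$ direction of~\KIp{} in \TSt{} by a different mechanism than the paper's proof. The paper uses three agents with $\psi = K_b p_0$ and $\varphi = K_c K_c p_0$ of depth~$2$; there the agent $a$ whose knowledge is tested has identity accessibility and is too shallow for $\varphi$ in every state, so the failure arises because the announcement reshapes $b$'s relation at the state $a$ sees --- a genuinely shallow $a$ picks up knowledge through a deeper agent, which is the literal ``leakage'' the name advertises. Your construction instead makes $a$ deep enough at $s$ so that the \TSt{} update prunes $a$'s own successor set directly, and uses the ambiguous-depth world $s_1$ only to satisfy the first conjunct of $\F(K_a p)$. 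Your version is more economical (two agents, atomic $\psi$, a depth-$1$ announcement, and the recursive conjuncts of $\F$ collapse to $\top$ as you observe); the paper's version better matches the informal phenomenon, since there the agent acquiring new knowledge did not perceive the announcement at all. One terminological nit: after the update $\sim'_a$ is not an equivalence relation (you still have $s_1 \sim'_a u$ because $a$ is shallow at $s_1$), so ``the new class $\{s,s_1\}$'' should read ``the set of $\sim'_a$-successors of $s$''; your conclusion is unaffected since $K^\infty_a$ only quantifies over those successors.
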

\begin{proof}
	Consider three worlds, \( \{0, 1, 2\} \) and three agents \( a, b, c \).
	The relations for \( a \) and \( c \) are identity, the relation for \( b
	\) is the symmetric reflexive closure of, \( 0 \sim_b 1 \sim_b 2 \).
	The depth of \( a \) is \( 1 \) everywhere, \( b \)'s depth is \( 0, 2,
	0 \) in each respective state and the depth of \( c \) is \( 2 \)
	everywhere.
	The atom \( p_0 \) is true only in \( 0 \) and \( 1 \).
	Consider \( \varphi = K_c K_c p_0 \), it is true in \( 0 \) and \( 1 \)
	only, and consider \( \psi = K_b p_0 \).
	\( K_a \psi \) is not true in state \( 1 \), however \( [\varphi] K_a
	\psi \) is.
	Moreover, one can easily check that \( \F(K_a \psi) \) is true in that
	state.
\end{proof}
The proof provides a practical example of such leakage in~\TSt{} and we further demonstrate knowledge leakage in Proposition~\ref{prop:badmuddy} in the muddy children reasoning problem (see Section~\ref{sec:muddy}).

Note how each direction of the equivalence in~\KIp{} expresses (\( \to \)) that no knowledge leakage occurs and (\( \leftarrow \)) no amnesia occurs.
As shown in Theorem~\ref{thm:col3str}, \DPAL{} verifies both directions and thus has neither amnesia nor knowledge leakage.
As reflected in the following proposition, although EDPAL has amnesia, it doesn't
have knowledge leakage and
verifies~\TAz{}.
\begin{prop}\label{prop:col1}\cite{DBLP:journals/corr/abs-2305-08607}
	\TSo{} verifies~\TAz{} and the \( \to \) direction in~\KI{} over \( \psi
	\in \L^\infty \), but not the converse.
\end{prop}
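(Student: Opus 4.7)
My plan is to dispatch the three sub-claims separately: validity of~\TAz{}, validity of the $\to$ direction of~\KI{} over all of $\L^\infty$, and a small countermodel for the $\leftarrow$ direction. The first is a semantic unfolding with respect to Definition~\ref{defi:sem1}, the second is immediate from Amnesia (Proposition~\ref{prop:amnesia}), and the third only needs one hand-built model.

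For~\TAz{}, I would fix $(M, s)$ with $(M,s) \models \P{a}{\d{\varphi}}$ and split on whether $(M,s) \models \varphi$; the negative case is vacuous on both sides. In the positive case the left side $[\varphi] K_a \psi$ unfolds to the conjunction of $d(a,s) - \d{\varphi} \geq \d{\psi}$ with $(M \mid \varphi, t) \models \psi$ for every $t \sim_a s$ satisfying $\varphi$, while the right side $K_a [\varphi] \psi$ unfolds to $d(a,s) \geq \d{\varphi} + \d{\psi}$ with $(M, t) \models [\varphi] \psi$ for every $t \sim_a s$. The depth clauses are literally the same; and for $t \sim_a s$ with $(M,t) \not\models \varphi$ the right-hand universal is vacuous at $t$ while the left-hand universal does not see $t$ at all (since $t \notin \states'$), so the content clauses match neighbor-by-neighbor. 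The hypothesis $\P{a}{\d{\varphi}}$ is in fact not needed for this argument in~\TSo{}.

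For the $\to$ direction of~\KI{} on arbitrary $\psi \in \L^\infty$, I would invoke Amnesia directly: under $\neg \P{a}{\d{\varphi}}$, $[\varphi] K_a \psi$ can only hold when $\varphi$ fails at $(M,s)$, so $[\varphi] K_a \psi \to (\varphi \to K_a \psi)$ is vacuously true. For the failure of the converse, I would construct a one-state model on agents $a \neq b$ and atoms $p \neq q$, with $V(s) = \{p, q\}$, $d(a,s) = 0$, $d(b,s) = 1$, and identity relations. Setting $\varphi = K_b q$ (so $\d{\varphi} = 1$) and $\psi = p$, the premises $\neg \P{a}{1}$, $\varphi$, and $K_a p$ all hold at $s$, giving $\varphi \to K_a \psi$; but the~\TSo{} update sends $d(a,s)$ to $-1 < \d{p} = 0$, so $K_a p$ fails in $M \mid \varphi$ and with it $[\varphi] K_a \psi$ fails at $s$.

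The only delicate step is the bookkeeping in~\TAz{} that matches the decremented-depth condition and restricted quantifier on the left with the original-depth condition and unrestricted quantifier on the right; there is no deeper conceptual obstacle beyond this matching.
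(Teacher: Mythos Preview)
The paper does not include a proof of this proposition in the body; it simply cites~\cite{DBLP:journals/corr/abs-2305-08607}. So there is nothing to compare against, and the question reduces to whether your argument is correct. It is.

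Your unfolding for~\TAz{} is exactly right: in \TSo{} the updated relation \( \sim'_a \) is just the restriction of \( \sim_a \) to \( \{t : (M,t)\models\varphi\} \), so the two universal quantifiers coincide state-by-state, and the depth clauses \( d(a,s)-\d{\varphi}\geq\d{\psi} \) and \( d(a,s)\geq\d{\varphi}+\d{\psi} \) are literally equivalent. Your observation that \( \P{a}{\d{\varphi}} \) is unnecessary in \TSo{} is also correct and worth keeping. The \( \to \) direction of~\KI{} via Amnesia is clean: under \( \neg\P{a}{\d{\varphi}} \) and \( (M,s)\models\varphi \) the left side \( [\varphi]K_a\psi \) is outright false, and when \( (M,s)\not\models\varphi \) both sides are vacuous. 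Your countermodel for the converse is fine; note that since \( \psi=p \) lies in \( \L_a^\infty \), you have in fact refuted the \( \leftarrow \) direction even on the restricted fragment where~\KI{} was originally stated, which is slightly stronger than needed. (A one-agent version also works, taking \( \varphi=K_a p \) and \( \psi=p \) with \( d(a,s)=0 \).)
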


\section{Axiomatizations}\label{sec:ax}
\begin{table}
\centering
 \begin{tabular}{|| r | l ||}
 \hline
 All axioms from Table~\ref{tab:axb} & \\
 \hline
 Atomic permanence & \( [\varphi] p \leftrightarrow (\varphi \to p) \) \\
 Depth adjustment & \( \forall d \in \Z, \; [\varphi] \E{a}{d} \leftrightarrow \left(
	 \varphi \to \E{a}{\depth{\varphi} + d} \right) \) \\
 Negation announcement & \( [\varphi] \neg \psi \leftrightarrow (\varphi \to \neg
	 [\varphi] \psi) \) \\
 Conjunction announcement & \( [\varphi] (\psi \wedge \chi) \leftrightarrow
	 ([\varphi] \psi \wedge [\varphi] \chi) \) \\
 Knowledge announcement & \( [\varphi](\P{a}{\depth{\psi}} \to K_a \psi) \leftrightarrow
	 ( \varphi \to \P{a}{\depth{\varphi} + \depth{\psi}} \to K_a [\varphi] \psi ) \) \\
 Announcement composition & \( [\varphi] [\psi] \chi \leftrightarrow
	 ([\varphi \wedge [\varphi] \psi] \chi) \) \\
 \hline
 \hline
 \textit{Modus ponens} & From \( \varphi \) and \( \varphi \to \psi \), deduce \(
 \psi \) \\
 Necessitation & From \( \varphi \) deduce \( \P{a}{\depth{\varphi}} \to
 K_a \varphi \) \\
 \hline
 \end{tabular}
 \caption{Sound and complete axiomatization of~\TSo{} over \( \L \).}\label{tab:ax1}
\end{table}

\begin{thm}\label{thm:ax1}
	The axiomatization in Table~\ref{tab:ax1} is sound and complete with
	respect to~\TSo{} (Definition~\ref{defi:sem1}) over the fragment \( \L
	\).
\end{thm}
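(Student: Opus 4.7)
The plan is to follow the standard reduction strategy familiar from PAL completeness proofs, using the announcement-free axiomatization of \DBEL{} (Theorem~\ref{thm:axb}) as the base and showing that the new axioms in Table~\ref{tab:ax1} are exactly what is needed to eliminate every occurrence of \( [\varphi] \psi \) from a formula in \( \L \). Soundness of each new axiom is a direct calculation against Definition~\ref{defi:sem1}: atomic permanence and negation/conjunction announcement are immediate since the restricted model preserves valuations and domain membership is controlled by \( \varphi \); depth adjustment follows from \( d'(a,s) = d(a,s) - \d{\varphi} \); knowledge announcement follows from the fact that in the restricted model an agent's depth drops by exactly \( \d{\varphi} \), so having knowledge of \( \psi \) in \( M\mid\varphi \) corresponds to being of depth at least \( \d{\varphi}+\d{\psi} \) in \( M \); announcement composition unpacks by comparing the two-step restriction \( (M\mid\varphi)\mid\psi \) with the one-step restriction \( M\mid(\varphi\wedge[\varphi]\psi) \).

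For completeness I would define a translation \( t:\L\to\H \) by recursion that uses the reduction axioms from right to left as rewrite rules: \( t \) leaves \( \H \)-subformulas alone and pushes every outermost \( [\varphi] \) through \( \neg \), \( \wedge \), \( K_a \), and nested announcements using atomic permanence, depth adjustment, negation announcement, conjunction announcement, knowledge announcement (after rewriting \( K_a\psi \) as \( \P{a}{\d{\psi}}\to K_a\psi \) via the depth deduction axiom of Table~\ref{tab:axb}), and announcement composition. For any formula \( \chi \in \L \), I would prove \( \vdash \chi \leftrightarrow t(\chi) \) by induction on a well-founded complexity measure on formulas, and \( \models \chi \leftrightarrow t(\chi) \) by the soundness of the same axioms. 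Given a valid \( \chi \in \L \), soundness gives \( \models t(\chi) \), Theorem~\ref{thm:axb} gives \( \vdash t(\chi) \) in the base system, and the proved equivalence \( \vdash \chi \leftrightarrow t(\chi) \) finishes the proof.

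The main obstacle is choosing the complexity measure that makes the rewriting terminate. Modal depth alone does not suffice, because the knowledge announcement axiom replaces \( [\varphi](\P{a}{\d{\psi}}\to K_a\psi) \) with a formula still containing \( [\varphi]\psi \), and in particular the right-hand side has the same modal depth. I would use the standard PAL trick of a lexicographically ordered pair, or a size-based measure such as the Lutz-style complexity \( c([\varphi]\psi) = (c(\varphi) + c(\psi))\cdot k \) for a suitable constant \( k \) depending on the constructor, so that each rewrite step strictly decreases the measure; the depth adjustment axiom needs a small adaptation (its right-hand side introduces an arithmetic expression on depths but no new modal content), and announcement composition requires verifying that \( c([\varphi\wedge[\varphi]\psi]\chi) < c([\varphi][\psi]\chi) \), which is the delicate case. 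Once that measure is in place, the remaining inductive verification is a routine case analysis using the equivalences provided by the reduction axioms and the necessitation and \textit{modus ponens} rules from Table~\ref{tab:ax1}.
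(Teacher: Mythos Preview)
Your direct reduction strategy is viable, but it is a genuinely different route from the paper's. The paper does \emph{not} reduce \( \L \) to \( \H \) using the axioms of Table~\ref{tab:ax1} directly. Instead it shows that Table~\ref{tab:ax1} is equivalent, on the fragment \( \L \), to the axiomatization of Table~\ref{tab:ax4} over the larger language \( \L^\infty \) (with the auxiliary operator \( K^\infty_a \)), and then invokes Theorem~\ref{thm:ax4}, whose completeness is obtained by the standard PAL-style reduction in \( \L^\infty \). The point of the detour is that in \( \L^\infty \) the knowledge announcement axiom has its classical shape \( [\varphi]K^\infty_a\psi \leftrightarrow (\varphi \to K^\infty_a[\varphi]\psi) \), so the reduction is routine; the bounded knowledge axiom then handles \( K_a \) uniformly. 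The equivalence of the two axiom sets on \( \L \) is established by the same proof-transformation argument used in Theorem~\ref{thm:axb}, so almost no new work is needed.

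Your approach can be made to work, but there is a concrete error you should fix. You write that you will rewrite \( K_a\psi \) as \( \Paphi[\psi] \to K_a\psi \) before applying the knowledge announcement axiom; but \( K_a\psi \) is \emph{not} equivalent to \( \P{a}{\d{\psi}} \to K_a\psi \) (only the forward implication holds). The correct rewrite, using depth deduction and a tautology, is \( K_a\psi \leftrightarrow \P{a}{\d{\psi}} \wedge (\P{a}{\d{\psi}} \to K_a\psi) \). After this you must split with conjunction announcement, handle the \( \P{a}{\d{\psi}} \) conjunct via exact depths and depth adjustment, and only then apply knowledge announcement to the other conjunct. A second point you gloss over is that pushing a provable equivalence under \( [\varphi] \) (i.e.\ from \( \vdash \chi_1\leftrightarrow\chi_2 \) to \( \vdash [\varphi]\chi_1 \leftrightarrow [\varphi]\chi_2 \)) requires deriving announcement necessitation from the reduction axioms and the rules of Table~\ref{tab:ax1}; this is doable but is exactly the kind of bookkeeping that the paper's detour through \( K^\infty_a \) sidesteps.
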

\begin{proof}
	Similarly to the proof of Proposition~\ref{thm:axb}, rather than
	directly showing soundness and completeness we show it is equivalent to
	the axiomatization of Table~\ref{tab:ax4}, which is shown to be sound and
	complete for~\TSo{} in Theorem~\ref{thm:ax4} in Appendix~\ref{app:ax}.

	In the first direction, all axioms in Table~\ref{tab:ax1} can be shown
	using those in Table~\ref{tab:ax4} immediately, either from the proof of
	Proposition~\ref{thm:axb} or because they are the same.
	The only difficulty lies in knowledge announcement, but a proof similar
	to equation~\eqref{eq:st1} shows it is sound.

	The other direction also follows the exact same proof as in
	Proposition~\ref{thm:axb}: the public announcement axioms are direct
	translations of the same axioms in Table~\ref{tab:ax4} by replacing the
	\( K^\infty_a \varphi \) subformulas with \( \Paphi \to K_a \varphi \).
	The proof transformation from Proposition~\ref{thm:axb} therefore
	still yields a proof of the same formula in this axiomatization, which
	proves completeness.
\end{proof}

We now present a sound set of axioms for~\DPAL{}.
The main missing axioms for a sound and complete axiomatization are knowledge and
public announcements, which we explored in the previous section, and announcement
composition.
In fact, announcement composition cannot exist in~\DPAL{}, since making a single
announcement of depth \( d_1 + d_2 \) can behave very differently from making an
announcement of depth \( d_1 \) followed by another of depth \( d_2 \),
for instance when an agent's depth is between \( d_1 \) and \( d_1 + d_2 \).

\begin{thm}\label{thm:ax1sound}
	Replacing knowledge announcement by~\KIp{} and~\TA{}
	and depth adjustment by,
	\[ \forall d \in \N, \quad [\varphi] \E{a}{d} \leftrightarrow \left(
	\varphi \to \left( (\P{a}{\depth{\varphi}} \wedge \E{a}{d +
	\depth{\varphi}}) \vee (\neg \P{a}{\depth{\varphi}} \wedge \E{a}{d})
	\right) \right) \]
	in Table~\ref{tab:ax1} produces a set of sound axioms with respect
	to~\DPAL{}~\footnote{One could also easily add axioms for \( K^\infty_a \) modal operators, for instance using those from Table~\ref{tab:ax4} in Appendix~\ref{app:ax}.}.
\end{thm}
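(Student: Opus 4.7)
The plan is to verify soundness axiom-by-axiom against the \DPAL{} semantics in Definition~\ref{defi:dpal}. The axioms split naturally into four groups, and only the modified depth adjustment requires real work.

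First, the axioms inherited from Table~\ref{tab:axb} involve no public announcement operator, so their evaluation at a state \( s \) in a \DPAL{} model coincides with their \DBEL{} evaluation on the underlying reduct \( (\states, \sim, V, d) \); soundness transfers immediately from Theorem~\ref{thm:axb}. \emph{Modus ponens} is always sound, and necessitation is sound because a \DPAL{}-valid formula \( \varphi \) is true at every state of every model, hence so is \( \P{a}{\depth{\varphi}} \to K_a \varphi \).

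Second, the syntactic announcement axioms---atomic permanence, negation announcement, and conjunction announcement---are pure semantic bookkeeping. Atomic permanence uses \( V'((1, s)) = V(s) \); the other two unfold the clause \( (M, s) \models [\varphi] \psi \iff ((M, s) \models \varphi \implies (M \mid \varphi, (1, s)) \models \psi) \) and reduce to propositional tautologies, exactly as in standard PAL. Third, \KIp{} and \TA{} are already asserted to be valid in \DPAL{} by Theorem~\ref{thm:col3str}, so no further work is needed for them.

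The main obstacle is the modified depth adjustment axiom, which requires a case analysis mirroring the piecewise definition~\eqref{eq:s3depth}. If \( (M, s) \not\models \varphi \), both sides collapse to true. Otherwise \( (M \mid \varphi, (1, s)) \models \E{a}{d} \) iff \( d'(a, (1, s)) = d \); when \( (M, s) \models \P{a}{\depth{\varphi}} \) this equals \( d(a, s) - \depth{\varphi} \), matching the first disjunct on the right-hand side, and when \( (M, s) \not\models \P{a}{\depth{\varphi}} \) it equals \( d(a, s) \), matching the second. Finally, one should note that the announcement composition axiom from Table~\ref{tab:ax1} is deliberately omitted from the sound set for \DPAL{}, as motivated in the paragraph preceding the theorem: a single announcement of depth \( d_1 + d_2 \) and a sequence of announcements of depths \( d_1 \) and \( d_2 \) yield different depth budgets for agents whose depth lies strictly between these values.
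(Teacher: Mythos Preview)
Your proof is correct and follows essentially the same route as the paper: the \DBEL{} axioms carry over verbatim, \KIp{} and \TA{} are delegated to Theorem~\ref{thm:col3str}, the remaining syntactic announcement axioms are routine unfoldings of the semantics, and the modified depth adjustment is handled by the same case split on~\eqref{eq:s3depth} that the paper carries out explicitly. The only notable difference is that you justify atomic permanence, negation, and conjunction announcement directly from Definition~\ref{defi:dpal}, whereas the paper appeals to Theorem~\ref{thm:ax1} and to the induction in the proof of~\KI{}; your route is slightly more self-contained. Your closing remark that announcement composition must be excluded is also a useful clarification, since the theorem statement does not say so explicitly even though the preceding discussion makes clear that this axiom fails in \DPAL{}.
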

\begin{proof}
Theorem~\ref{thm:col3str} verifies the two axioms~\KIp{} and~\TA{}.
The proofs for most axioms follows from Theorem~\ref{thm:ax1} and that knowledge
is defined the same way in both semantics.
In particular, atomic permanence and conjunction announcement axioms are proven
in Theorem~\ref{thm:col3}'s induction for \KI{}.

We are left to show depth adjustment,
\begin{align*}
(M, s) \models [\varphi] \E{a}{d}
& \iff (M, s) \models \varphi \implies (M \mid \varphi, (1, s)) \models \E{a}{d} \\
& \iff (M, s) \models \varphi \implies \begin{cases}
	d(a, s) = d + \depth{\varphi} & \quad \text{if } d(a, s) \geq \depth{\varphi} \\
	d(a, s) = d & \quad \text{if } d(a, s) < \depth{\varphi}
	\end{cases} \\
& \iff (M, s) \models \varphi \to \left( (\P{a}{\depth{\varphi}} \wedge \E{a}{d +
	 \depth{\varphi}}) \vee (\neg \P{a}{\depth{\varphi}} \wedge \E{a}{d})
	 \right). \qedhere
\end{align*}
\end{proof}

\section{Complexity}\label{sec:complex}
We first state that adding depth bounds does not change the complexity of
\textbf{S5} and PAL respectively.
\begin{thm}\label{thm:sat}
	The satisfiability problems for~\DBEL{} with \( n \geq 2 \) agents and for
	\TSo{} are \textsf{PSPACE}-complete.
\end{thm}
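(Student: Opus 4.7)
The statement has two halves: PSPACE-hardness and PSPACE membership. Hardness is essentially free. The paper already notes, in the introduction, that DBEL contains \textbf{S5} as a syntactic fragment, and the Halpern--Moses result gives PSPACE-hardness of multi-agent \textbf{S5} satisfiability for \( n\geq 2 \). Since \TSo{} is a conservative extension of \DBEL{} (take \( [\varphi]\psi \) to be a fresh syntactic case and simply never use it), the same reduction shows \TSo{}-satisfiability is PSPACE-hard. So the content of the theorem is the matching PSPACE algorithms.

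For \DBEL{}, the plan is to adapt the Ladner / Halpern--Moses tableau for \textbf{S5}. Given input \( \varphi \), the only new ingredients relative to \textbf{S5} are the depth atoms \( \E{a}{d} \) and \( \P{a}{d} \) together with the operator \( K_a \), which by the bounded-knowledge axiom splits as \( \Paphi \wedge K^\infty_a \varphi \). Treat depth atoms as ordinary propositional constants subject to the three depth axioms of Table~\ref{tab:axb} (depth monotonicity, exact depths, uniqueness); since every depth atom occurring in the tableau is a subformula of \( \varphi \), only finitely many depth values (bounded by \( |\varphi| \)) ever matter, and one can pick a nondeterministic witness depth for each agent at each world in polynomial space. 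The recursion then proceeds exactly as in the \textbf{S5} tableau: to check a witness world containing \( \neg K^\infty_a \psi \), guess a propositional/depth type for a \( \sim_a \)-successor containing \( \neg \psi \), along with the \( K^\infty_a \)-formulas inherited by equivalence, and recurse. Recursion depth is bounded by \( \d{\varphi} \), so the total space is polynomial. The depth-deduction axiom \( K_a\psi \to \P{a}{\d{\psi}} \) is enforced locally as a consistency check on each guessed type.

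For \TSo{}, the plan is to reduce satisfiability to \DBEL{} satisfiability via the axioms of Table~\ref{tab:ax1}. The rewrite rules atomic permanence, depth adjustment, negation/conjunction announcement, knowledge announcement, and announcement composition (which is sound in \TSo{}, unlike in \DPAL{}) together let one eliminate each \( [\varphi]\cdot \) by pushing it inward until it lands on an atom. The subtlety is that a naive rewrite can blow up the formula exponentially, so one must not materialize the translation. Instead, run the \DBEL{} tableau above on the original formula, and whenever the top-level connective being processed is \( [\varphi]\psi \) apply one reduction step and recurse on the (still polynomial-depth) result, storing only the current subformula on the stack together with a bounded context of pending announcements. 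Because each reduction strictly decreases a lexicographic measure (outer announcement modal depth, then formula size under that announcement), the recursion depth remains polynomial in \( |\varphi| \), and each frame is polynomially sized, giving a PSPACE procedure.

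The main obstacle is the second half of the \TSo{} argument: naively translating away announcements blows up formulas, so the reduction has to be interleaved with the tableau so that at any point only a polynomial-size ``current goal'' plus a polynomial-depth stack of pending rewrites is held in memory. The well-foundedness of the rewrite, via the lexicographic measure above, is what makes this work, and it is exactly the step that would fail for \DPAL{}, where announcement composition is not valid and hence the reduction strategy breaks down --- consistent with the theorem's silence on \DPAL{}.
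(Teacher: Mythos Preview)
Your hardness argument and your \DBEL{} upper bound are essentially what the paper does: PSPACE-hardness via the \( \textbf{S5}_n \) (resp.\ PAL) fragment, and for the upper bound a Ladner/Halpern--Moses style tableau in which depth atoms are handled as propositional constants constrained by the depth axioms, with a local consistency check that each type admits a coherent depth assignment. The paper spells this last step out by listing explicit closure rules for types and giving a concrete recipe to extract a witness depth; your ``guess a depth in polynomial space'' is the same idea at a coarser grain.

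Where you diverge is the \TSo{} upper bound. The paper does \emph{not} interleave reduction with the tableau. Instead it first rewrites every \( K_a\chi \) as \( \P{a}{\d{\chi}}\wedge K^\infty_a\chi \) (linear blowup), after which the announcement reduction axioms for \( K^\infty_a \) are \emph{literally} those of PAL; it then invokes Lutz's Lemma~9, which already gives a polynomial-size announcement-free translation, and runs the \DBEL{} tableau on that. So the paper sidesteps the whole ``don't materialize the translation'' issue by citing a result that says the materialized translation is already polynomial. Your interleaved strategy is a legitimate alternative and is closer in spirit to how one would prove Lutz's lemma from scratch, but as written it is the sketchiest part of your proposal: the lexicographic measure you name (outer announcement depth, then size) does not obviously decrease under announcement composition \( [\varphi][\psi]\chi \leftrightarrow [\varphi\wedge[\varphi]\psi]\chi \), since the new announcement has the same depth as the composite and the resulting formula is larger. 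Making this rigorous requires the subformula-closure bookkeeping that Lutz's argument provides; the paper simply imports that result rather than redoing it.

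In short: correct overall strategy, same approach for \DBEL{}, but for \TSo{} the paper takes the shorter route of translating to \( K^\infty_a \) and quoting Lutz, whereas you re-engineer the interleaving argument and leave its termination bound under-justified.
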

\begin{proof}
	The lower bound results from \textsf{PSPACE}-completeness of \(
	\textbf{S5}_n \) for \( n \geq 2 \)~\cite{DBLP:journals/ai/HalpernM92}
	and PAL~\cite{DBLP:conf/atal/Lutz06}, respective syntactic fragments
	of~\DBEL{} and~\TSo{}.

	For both logics, we begin by translating \( K_a \varphi \)
	subformulas into \( \P{a}{\depth{\varphi}} \wedge K^\infty_a \varphi \),
	which only increases formula size at most linearly.
	Then, in the case of~\TSo{}, using the same translation
	as Lemma~9 of~\cite{DBLP:conf/atal/Lutz06}, we translate formulas with
	public announcement \( \varphi \) into equivalent formulas \( t(\varphi)
	\) without public announcement such that \( \card{t(\varphi)} \) is at
	most polynomial in \( \card{\varphi} \) (this is possible because the
	axiomatization of \( K^\infty_a \) with relation to public announcements
	is the same).

	We have therefore transformed our formula \( \varphi \) into an
	equivalent formula in the syntactic fragment without \( K_a \) operators
	or public announcements of polynomial size relative to the initial
	formula \( \varphi \)'s size.

We can then use the ELE-World procedure from Figure~6
of~\cite{DBLP:conf/atal/Lutz06} by re-defining types to accommodate for depth
atoms.
As a reminder, we define \( \textbf{cl}(\Gamma) \) for any set of
formulas \( \Gamma \) to be the smallest set of formulas containing \(
\Gamma \) and closed by single negation and sub-formulas.
We then say that \( \gamma \subseteq \textbf{cl}(\Gamma) \) is a type if all of
	the following are true,
\begin{enumerate}
	\item \( \neg \psi \in \gamma \) if and only if \( \psi \not\in \gamma \)
		when \( \psi \) is not a negation
	\item if \( \psi \wedge \chi \in \textbf{cl}(\Gamma) \) then \( \psi
		\wedge \chi \in \gamma \) if and only if \( \psi \in \gamma \)
		and \( \chi \in \gamma \)
	\item if \( K^\infty_a \psi \in \gamma \) then \( \psi \in \gamma \)
	\item\label{item:rule4} if \( \P{a}{d} \in \gamma \) then \( \neg \P{a}{d'} \not\in \gamma
		\) and \( \E{a}{d'} \not\in \gamma \) for all \( d' < d \)
	\item\label{item:rule5} if \( \E{a}{d} \in \gamma \) then \( \E{a}{d'} \not\in \gamma \)
		for all \( d' \neq d \) and \( \neg \P{a}{d'} \not\in \gamma \)
		for \( d' < d \)
	\item\label{item:rule6} if \( \neg \P{a}{d} \in \gamma \) then there
		exists \( d' < d \) such that \( \neg \E{a}{d'} \not\in \gamma \)
	\item\label{item:rule7} \( \neg \P{a}{0} \not\in \gamma \)
\end{enumerate}
Clearly, checking that a subset of \( \textbf{cl}(\Gamma) \) is not a type does
not increase the space complexity of the algorithm.
Lemma~18 from~\cite{DBLP:conf/atal/Lutz06} remains true here, i.e.\ the procedure
ELE-World returns true if and only if the formula is satisfiable.
It is sufficient for this to show that any type has a consistent depth
assignment for all agents, as it is clear that if any of the new rules introduced
for depths are violated the formula is not satisfiable.

If the type contains \( \E{a}{d} \) then it contains only one
such depth atom per rule~\ref{item:rule5}, the only \( \P{a}{d'} \) it
contains are for \( d' \leq d \) per rule~\ref{item:rule4}, and it does not
contain \( \neg \P{a}{d'} \) for \( d' \leq d \) per rule~\ref{item:rule5},
therefore \( d(a) = d \) is a consistent setting.
If it does not contain any \( \E{a}{d} \), it may contain a number of
	inequalities polynomial in \( \card{\varphi} \), that admit a solution in
	\( \N \) by rule~\ref{item:rule7}.
Therefore a possible algorithm is \( d_0 = \max \{ d', \; \P{a}{d'} \in
\gamma \} \) and then \( d(a) = \min \{ d', \; d' \geq d_0, \; \neg \E{a}{d'} \not\in
\gamma \} \).
If no \( \P{a}{d} \) are in the type, then \( d_0 = \min \{ d', \; \neg \P{a}{d'}
\in \gamma \} \) and \( d(a) = \max \{ d', \; d' \leq d_0, \neg \E{a}{d'} \not\in
\gamma\} \) are a possible choice (this choice will always be greater or equal to
\( 0 \) because of rules~\ref{item:rule7} and~\ref{item:rule6} above).
Finally, if there are no depth atoms in the type, the formula is clearly
satisfiable for any choice of \( d(a) \).
\end{proof}

The model checking problem remains \textsf{P}-complete in~\DBEL{},
using the same algorithm as for \textbf{S5}~\cite{DBLP:books/mit/FHMV1995}.
For~\TSo{} and~\TSt{}, the model checking problem is \textsf{P}-complete, as the
same algorithm as PAL can be used, relying on the fact that model size can only
decrease after announcements~\cite{DBLP:conf/aiml/KooiB04} (the lower bounds
results from the fact that PAL is a fragment of both).
This is however not the case of~\DPAL{}, where model size grows after
announcements, potentially exponentially, in fact model checking in~\DPAL{} is
\textsf{NP}-hard~\cite{DBLP:journals/corr/abs-2305-08607}.
\begin{thm}\label{thm:s3exp}
	The complexity of model checking for finite models in~\DPAL{} is
	in \textsf{EXPTIME}. %
	An upper bound in time complexity for checking \( \varphi \) in \( M \)
	is \( O ( 2^{2 \card{\varphi}} {\lVert M \rVert}) \), where \( \lVert M
	\rVert \) is the sum of the number of states and number of pairs in each
	relation of \( M \).
\end{thm}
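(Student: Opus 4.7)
The plan is to present a recursive model-checking algorithm that, for any input model $M$ and formula $\psi$, computes the set $\text{Sat}(M, \psi) = \{ s \in \states : (M, s) \models \psi \}$ by structural induction on $\psi$, and then to bound its running time by induction on $|\psi|$. For atomic formulas, boolean connectives, and the two knowledge operators, I would use exactly the standard \textbf{S5} procedure: reading $V$ or $d$, taking set operations, or iterating over the edges of $\sim_a$ respectively. Each of these cases adds at most $O(\|M\|)$ work on top of the recursive calls. The only genuinely new case is $[\varphi]\psi$.

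For $[\varphi]\psi$, I would first recursively compute $\text{Sat}(M, \varphi)$ together with $\text{Sat}(M, \P{a}{\d{\varphi}})$ for every agent $a$ occurring in $\psi$, then explicitly build $M \mid \varphi$ following Definition~\ref{defi:dpal}: duplicate each state, add the bridging edges $(1,s)\,R_a\,(0,s)$ wherever $\P{a}{\d{\varphi}}$ fails at $s$, take the reflexive--symmetric--transitive closure, and adjust depths via~(\ref{eq:s3depth}). Finally, I recursively compute $\text{Sat}(M \mid \varphi, \psi)$ and return the original states whose positive copies lie in this set. The crucial quantitative observation is that $\|M \mid \varphi\| \leq c' \|M\|$ for a small constant $c'$: the state set at most doubles, the duplicated relations contribute a constant factor, and the bridging edges add only $O(|M|)$ per agent.

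The analysis then proceeds by induction on $\psi$ showing that $T(\psi, \|M\|) \leq c \cdot 2^{2|\psi|} \|M\|$ for a suitable constant $c$, uniformly over all models. The non-announcement cases incur only an additive $O(\|M\|)$, which is absorbed by the increase of $|\psi|$ by one. For the announcement case, the induction hypothesis applied to $\varphi$ in $M$ and to $\psi$ in $M \mid \varphi$ yields
\[ T([\varphi]\psi, \|M\|) \;\leq\; c \cdot 2^{2|\varphi|} \|M\| \;+\; O(\|M\|) \;+\; c \cdot 2^{2|\psi|} \cdot c' \|M\|, \]
which is bounded by $c \cdot 2^{2|[\varphi]\psi|} \|M\|$ because $|[\varphi]\psi| \geq |\varphi| + |\psi| + 1$. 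The main obstacle is purely bookkeeping: two distinct sources of blow-up must be tracked---the doubling of model size across each nested announcement, and the formula-size factor in each recursive pass over the model---and both have to be absorbed into the single $2^{2|\psi|}$. Phrasing the inductive bound as a uniform statement over \emph{all} input models makes each case mechanical, and the \textsf{EXPTIME} upper bound follows since $2^{2|\varphi|} \|M\|$ is singly exponential in the input size.
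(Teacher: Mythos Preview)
Your approach is essentially the same as the paper's: recurse on the formula, explicitly construct $M\mid\varphi$ at each announcement, and exploit the fact that $\lVert M\mid\varphi\rVert$ is bounded by a fixed constant multiple of $\lVert M\rVert$ to push the inductive bound through. The paper is slightly more explicit about one point you leave informal: your remark that ``the bridging edges add only $O(|M|)$ per agent'' is a statement about $R_a$ \emph{before} taking the transitive closure, not about $\sim'_a$. The paper closes this gap by observing that each $\sim_a$-equivalence class of size $k$ (contributing $k^2$ pairs to $\lVert M\rVert$) produces at most a single $\sim'_a$-class of size $2k$ (contributing at most $4k^2$ pairs), whence $c'=4$ works; you may want to add that sentence.
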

\begin{proof}
The model-checking algorithm is the same as the one for public
announcement logic~\cite{DBLP:conf/aiml/KooiB04}: a tree is built from
subformulas \( \varphi \), with splits introduced only for subformulas of
the form \( [\psi] \chi \), with \( \psi \) to the left and \( \chi \) to
the right.
Treating a node labeled \( \psi \) means labeling each state in \( M \) with
either \( \psi \) or \( \neg \psi \).
The tree is treated from bottom-left to the top, always going up first except
when a node of the type \( [\psi] \chi \) is found.
In that case, since the nodes in the left sub-tree have been treated, we can
build \( M \mid \psi \) easily in time \( O({\lVert M \rVert}) \) from the
truth value of \( \psi \) and the depth functions of \( M \).
Moreover, the size of \( M \mid \psi \) is at most \( 4 \lVert M \rVert \).

To see this, consider an equivalence class for \( \sim_a \) in \( M \) of size
\( k \), it has exactly \( k^2 \) connections within it.
	The number
of states it creates in \( M \mid \psi \) is at most \( 2k \), and the
number of connections it creates is at most \( 4k^2 \).
Each connection being in exactly one connected component means the bound holds.

Therefore we can recurse in the right sub-tree with \( M \mid \varphi \) to check
\( \chi \) in time \( O( 2^{2 \card{\chi}} \times 4 {\lVert M \rVert} ) \).
	Writing \( O(\lVert M \rVert) \leq c \lVert M \rVert \) the time
	necessary to build \( M \mid \varphi \), we find that checking \( [\psi]
	\chi \) takes time at most \( O( ( c + 2^{2 \card{\psi}} + 2^{2
	\card{\chi} + 2}) {\lVert M \rVert}) = O(2^{2 \card{[\psi] \chi}} {\lVert
	M \rVert}) \).
\end{proof}

\section{Muddy children}\label{sec:muddy}
Consider the well-known muddy children reasoning problem, where \( n \) children
convene after playing outside with mud.
\( k \geq 1 \) of them have mud on their foreheads, but have no way of knowing
it.
The father, an external agent, announces that at least one child has mud on their
forehead.
Then, he repeatedly asks if any child would like to go wash themselves.
After exactly \( k - 1 \) repetitions of the father's question, all muddy
children understand they are muddy and go wash themselves.
Readers unfamiliar with the reasoning problem and its solution are directed
to Van~Ditmarsch et.\ al~\cite{van2007dynamic}'s treatment using PAL\@.

Consider the set of states \( {\{ 0,1 \}}^n \), where each tuple contains \( n \)
entries indicating for each child if they are muddy (\( 1 \)) or not (\( 0 \)).
For the sake of simplicity and since it is of depth \( 0 \), we assume the
father's announcement has taken place and therefore define the Kripke structure
\( \Mn \) with states \( {\{ 0,1 \}}^n \setminus {\{0\}}^n \) with the usual
definition of the agents' knowledge relations~\cite{DBLP:books/mit/FHMV1995}.
We define the \DPAL{} class of muddy children models to be models \( \Mhn \)
extending \( \Mn \) with any depth function.
We name \( m_i \) the atom expressing that child \( i \) is muddy.

We number the agents in \( [| 0; n-1 |] \), where the first \( k \) are muddy,
and focus on the reasoning of one agent (without loss of generality agent \( 0
\)) to understand that it is muddy.
Recall the definition of the dual of public announcements, \(
\langle\varphi\rangle \psi := \neg[\varphi]\neg \psi \) and define the following
series of formulas for \( i \leq k \),
\[
\varphi_i = \langle \neg K_{i-1} m_{i-1} \rangle \langle \neg K_{i-2} m_{i-2}
\rangle \cdots \langle\neg K_1 m_1 \rangle K_0 m_0.
\]
Here \( \varphi_k \) states that if each of the children from \( k-1 \) to \( 1
\) announce one after the other they don't know they are muddy, then child \( 0
\) knows that they (child \(0\)) are muddy~\footnote{These announcements are a
sufficient subset of the full announcements \( \wedge_{j=1,\ldots,n} \neg (K_j m_j
\vee K_j \neg m_j) \) in the usual formulation.}
It is well known this formula is true for unbounded agents in \( \Mn \) in
PAL (it is also a consequence of Theorem~\ref{thm:uppbound} below).
The following two theorems define a sufficient structure of knowledge of depths
for the formula to be true and a necessary condition on the structure of
knowledge of depths for it to be true.
\begin{thm}[Upper bound]\label{thm:uppbound}
    For all three semantics,
	\(
	K_0 \left( \P{0}{k-1} \wedge K_1 ( \P{1}{k-2} \wedge \cdots K_{k-1}(
	\P{k-1}{0}) \cdots ) \right) \to \varphi_k
	\)
	is true in all muddy children models \( \Mhn \) in the initial state.
\end{thm}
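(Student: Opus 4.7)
The plan is to proceed by induction on $k \geq 1$, using~\TA{} from Theorem~\ref{thm:col3str} to iteratively commute each outer announcement with the $K_0$ operator, thereby reducing to a smaller muddy-children instance at each step. Write $s^{\ast}$ for the initial state in which children $0, \ldots, k-1$ are muddy. The base case $k = 1$ is immediate: the hypothesis semantically forces agent $0$ to have depth at least $1$ via evaluation of the outer $K_0$, and in $\Mn$ the only state agent $0$ finds indistinguishable from $s^{\ast}$ is $s^{\ast}$ itself, since the all-clean world has been removed by the father's announcement, so $K_0 m_0$ holds.

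For the inductive step, I first verify the side condition $K^{\infty}_0(\neg K_{k-1} m_{k-1} \to \P{0}{1})$ required by~\TA{}; this follows from the hypothesis's conjunct $\P{0}{k-1}$ combined with depth monotonicity. Applying~\TA{} lets me rewrite $[\neg K_{k-1} m_{k-1}] K_0 \psi$ into an equivalent form in which the announcement moves inside $K_0$. I then analyse the updated model $\Mhn \mid \neg K_{k-1} m_{k-1}$: the announcement eliminates precisely the state in which agent $k-1$ is the unique muddy child, using that the innermost nested hypothesis guarantees agent $k-1$ has enough depth to deduce its own muddiness in that state, so agent $0$'s perspective in the updated model corresponds to a $(k-1)$-muddy-children instance. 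The remaining inner conjuncts $K_1(\P{1}{k-2} \wedge \cdots K_{k-1}(\P{k-1}{0}))$ transport to exactly the nested depth-knowledge hypothesis needed to invoke the inductive hypothesis from agent $1$'s standpoint within agent $0$'s accessible states, after accounting for the depth decrement encoded in Definition~\ref{defi:dpal}.

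The main obstacle is managing the interaction between the depth decrement~\eqref{eq:s3depth} and the nested knowledge chain, since the bounds $\P{j}{k-1-j}$ are sharp: after $j$ successive announcements of depth $1$, agent $j$'s depth has dropped by $j$, and the inductive invocation at the reduced step requires exactly $\P{j}{k-2-j}$, matching the hypothesis once one index is stripped. One must also check that the inductive hypothesis transports correctly through the duplicated Kripke frame produced by the DPAL update; this reduces to verifying that the negative copies $(0, s)$ are inaccessible to agent $0$ from $(1, s^{\ast})$, which follows from the conjunct $\P{0}{k-1}$ and the fact that the announcement has depth $1$. For~\TSo{} and~\TSt{}, the same inductive skeleton applies with lighter depth bookkeeping using the model-update rules of Definitions~\ref{defi:sem1} and~\ref{defi:sem2} respectively.
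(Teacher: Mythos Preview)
Your proposal misreads the structure of \( \varphi_k \). The operator \( K_0 \) is the \emph{innermost} symbol: \( \varphi_k = \langle \neg K_{k-1} m_{k-1} \rangle \cdots \langle \neg K_1 m_1 \rangle K_0 m_0 \). The subformula following the outermost diamond is \( \langle \neg K_{k-2} m_{k-2} \rangle \cdots K_0 m_0 \), which is not of the shape \( K_0 \psi \). Hence your plan to ``rewrite \( [\neg K_{k-1} m_{k-1}] K_0 \psi \)'' via~\TA{} does not apply at that point; there is no \( K_0 \) adjacent to the outer announcement to commute with. If you want to use~\TA{} at all, it would have to be applied from the inside out (first to \( [\neg K_1 m_1] K_0 m_0 \), in the model obtained after the prior \( k-2 \) announcements), and each application then requires verifying the side condition \( K^\infty_0(\neg K_j m_j \to \P{0}{1}) \) in a different updated model. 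That is a workable route, but it is not the one you describe, and it does not by itself produce a ``smaller muddy-children instance'' --- you would still owe a semantic argument that the sequence of announcements fails at the state \( (0,1,\ldots,1) \).

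Relatedly, the inductive hypothesis is not invoked ``from agent \( 1 \)'s standpoint.'' The formula \( \varphi_{k-1} \) still concludes with \( K_0 m_0 \), so the instance you reduce to has agent \( 0 \) as the protagonist, and the hypothesis you need in the updated model is \( K_0(\P{0}{k-2} \wedge K_1(\P{1}{k-3} \wedge \cdots K_{k-2}(\P{k-2}{0}))) \), obtained from the original one by the depth decrement, not by a shift of agent index. Your paragraph on ``transporting the inner conjuncts'' suggests you are relabeling agents, which is not what is happening.

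The paper's proof drops~\TA{} entirely and argues directly on the semantics: it shows the first announcement is true at \( s_k \), then identifies the ``active'' set \( S_{k-1} = \{0,1\}^{k-1} \times \{1\} \setminus \{0\}^k \) in the positive copy of the updated model and proves that (i) states with last coordinate \( 0 \) are unreachable by the remaining operators since \( K_{k-1} \) no longer occurs, and (ii) for each agent \( i < k-1 \), the links to the negative copy land only in \( S_{k-1} \setminus S_i \), hence become irrelevant by the time \( K_i \) is queried. This is the substantive step your proposal glosses over: eliminating the single state \( (0,\ldots,0,1) \) is not enough, and your check that ``negative copies are inaccessible to agent \( 0 \)'' handles only agent \( 0 \), not agents \( 1,\ldots,k-2 \) whose operators still occur in the remaining announcements.
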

Note that this formula directly provides an upper bound on the structure of
depths and knowledge about depths: it shows a sufficient condition on the
knowledge of depths for the problem to be solvable by agent \( 0 \).
Moreover, the upper bound for one child readily generalizes to a sufficient
condition for all children to understand they are muddy: each muddy child must
know they are of depth at least \( k-1 \), know at least some other muddy child knows they
are of depth at least \( k-2 \), and know that that other child knows some other muddy
child knows they are of depth at least \( k-3 \), \textit{etc}.
\begin{proof}
	For the sake of simplicity and since it does not change the treatment of
	the problem, we assume \( n = k \).
	We show the result for \DPAL{}, as the treatments for~\TSo{} and~\TSt{}
	are similar.

	We will show the result by induction over \( k \).
	Denote \( s_k = (1, \ldots, 1) \) the true state of the world where all
	the children are muddy.

	For \( k = 2 \), we assume \( K_0 \P{0}{1} \) and want to show
	\(
		\neg K_1 m_1 \wedge [\neg K_1 m_1] K_0 m_0
	\).
	First notice that \( (\hat{M}_2, s_2) \models \neg K_1 m_1 \), simply because
	it considers the state \( (1, 0) \) to also be possible.
	In the state \( (0, 1) \), child \( 1 \) knows it is muddy.
	Therefore, the set of states for the successful part of the model update
	will be \( (1, (1, 1)) \) and \( (1, (1, 0)) \).
	Moreover, since \( K_0 \P{0}{1} \), it is deep enough in \( s_2 \) to not
	have any links to the unsuccessful part of the model update, therefore it
	knows \( m_0 \).

	Consider some \( k > 2 \), we denote \( S_i \) the set of states that
	are ``active'' when considering \( \varphi_i \).
	More precisely, we set \( S_i = {\{0,1\}}^i \times {\{ 1 \}}^{k-i}
	\setminus {\{ 0 \}}^k \).
	We will show that after \( k-i \) announcements, the remainder of the
	problem is equivalent to checking \( \varphi_i \) on the subgraph induced
	by the states \( S_i \).
	This is evident for \( i = k \) by definition, we now show by descending
	induction that it is equivalent to checking \( \varphi_2 \) on \( S_2 \),
	which we have just verified to be true.

	Firstly, it is true that \( (\Mhn, s_k) \models \neg K_{k-1} m_{k-1} \)
	since child \( k-1 \) considers possible the state \( (1, \ldots, 1, 0)
	\).
	The set of states in which \( K_{k-1} m_{k-1} \) holds is exactly \( (0,
	\ldots, 0, 1) \).
	Therefore, the model update will create a copy of all other states.
	We then notice that the set of states whose last component is \( 0 \) can
	be ignored in the rest of the problem: they are not reachable from \( s_k
	\) by any sequence of \( \sim_i \) that does not contain \( \sim_{k-1}
	\) and the rest of the formula \( \varphi_{k-1} \) to be checked does
	not use any modal operators for agent \( k-1 \) any more.
	These states will never be reached and can therefore be
	removed without altering the result of the rest of the execution.

	We are therefore restricting ourselves, after the model update, to the
	set of states \( S_{k-1} \) in the positive part of the model.
	Note however there are still possibly links between the negative part of
	the model and \( S_{k-1} \) in the positive part of the model.
	We will show that these links have no effect on the checking of the rest
	of the formula, by showing that links for child \( i \) find themselves
	in \( S_{k-1} \setminus S_i \): therefore, by the time we query modal
	operator \( i \), the set of ignored states will contain all states with
	a link for child \( i \).

	For child \( i < k-1 \), the information we have about its depth is \(
	K_0 K_1 \cdots K_i \P{i}{k-1-i} \) before the model update.
	Therefore, we in particular know it is deep enough for the announcement
	(which is of depth \( 1 \leq k-1-i \)) in the set of states in which the
	\( i \) first components might have changed compared to \( s_k \) but the
	last \( k-1-i \) are all fixed to \( 1 \): this is exactly \( S_i \).

	We have shown that the recursive check in \( M \mid \neg
	K_{k-1} m_{k-1} \) will take place on a set of states for which the
	execution is equivalent to \( S_{k-1} \) and on which we will have to
	check the formula \( \varphi_{k-1} \).
	Finally, since the depths of each agent other than \( k-1 \) was at
	least \( 1 \) on \( S_{k-2} \), they are reduced by \( 1 \) and the
	induction hypothesis on depths for \( k-2 \) is also verified.
\end{proof}

\begin{thm}[Lower bound]\label{thm:lowbound}
	For~\DPAL{}, the formula
	\( \varphi_k \to K_0 \P{0}{k-1} \)
	is true in all models \( \Mhn \).
\end{thm}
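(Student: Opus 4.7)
The plan is to argue by contrapositive: assume that at the initial state $s_k$ of $\Mhn$ the conclusion $K_0 \P{0}{k-1}$ fails, and derive that $\varphi_k$ also fails at $s_k$. By the semantics of $K_0$ and the $\sim_0$ equivalence inherited from $\Mn$, the hypothesis produces a state $s'$ with $s_k \sim_0 s'$ and $d(0, s') < k-1$; the only two candidates are $s_k$ itself and its $\sim_0$-twin $s_0 = (0, 1, \ldots, 1, 0, \ldots, 0)$, which differs from $s_k$ only in $m_0$.

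First I would compute depths: each announcement $\neg K_i m_i$ has modal depth $1$, so the $k-1$ nested announcements in $\varphi_k$ together demand $k-1$ units of depth from agent $0$ along any chain of positive copies contributing to the final evaluation of $K_0 m_0$, traced through the cascade $\Mhn \to M_1 \to \cdots \to M_{k-1}$ built according to Definition~\ref{defi:dpal}. Then I would examine inductively how $\sim_0$ evolves along this cascade: at the announcement where agent $0$'s remaining budget at the current image of $s'$ first falls below the announcement depth, equation~\eqref{eq:s3depth} together with the clause defining $\sim'_a$ forces a fresh $\sim_0$-edge from the positive image of $s'$ to its negative copy.

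Following this edge and using that the negative copy still carries the original $\sim_0$, under which $s_k$ and $s_0$ remain equivalent, I would exhibit a $\sim_0$-path in $M_{k-1}$ from the intended evaluation point $(1, \ldots, (1, s_k))$ to a state whose underlying $\Mhn$-component is $s_0$; at this witness $m_0$ is false, so $K_0 m_0$ fails at the terminal positive state and $\varphi_k$ is falsified at $s_k$, completing the contrapositive.

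The main obstacle is the bookkeeping across the iterated positive and negative copies: one must verify that the witness state $s'$ is not filtered out of the relevant positive copies by the intermediate announcements, which reduces to checking for the specific $s_0$-like configurations that $K_i m_i$ fails at their intermediate images (a muddy-children-style analysis of the $\sim_i$-classes in $\Mn$). Combined with the depth-budget count, this yields an induction on $k$ in which the negative-copy leak identified above survives all the way to the final model and produces the desired contradiction.
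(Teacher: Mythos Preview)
Your proposal is correct and follows essentially the same argument as the paper: contrapositive, identify the $\sim_0$-twin of $s_k$ (the paper's $\tilde{s}_k$), run the depth budget down through the depth-$1$ announcements until it is exhausted, use the \DPAL{} update rule to link the positive copy to its negative copy at that step, and then argue that this link persists through the remaining announcements so that a $\neg m_0$-state stays $\sim_0$-reachable and $K_0 m_0$ fails. Two cosmetic remarks: your closing ``induction on $k$'' is really the paper's ``immediate induction'' over the remaining announcements for a fixed $k$, and the survival check you flag as the main obstacle is exactly what the paper dispatches by monotonicity against the classical muddy-children lower bound.
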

\begin{proof}

	We use the notations from the proof of Theorem~\ref{thm:uppbound} above.
	Notice first that all of the announcements remain true when they are
	performed, because \( \neg K^\infty_{k-1} m_{k-1} \to \neg K_{k-1}
	m_{k-1} \) and the implicant is true by the usual lower bound for muddy
	children (it takes \( k \) announcements for any child to know they are
	muddy).

	Assume by contraposition that \( d(0, s_k) = i < k-1 \) or \( d(0,
	\tilde{s}_k) = i < k-1 \) initially, where \( \tilde{s}_k \) is the
	state \( (0, 1, \ldots, 1) \) of \( \Mhn \).
	After \( i \) public announcements, it will be true that \( \neg K_0 m_0
	\) still, as well as \( \neg K_0 \neg \E{0}{0} \) since each public
	announcement is of depth \( 1 \).
	The former is a consequence of the usual lower bound for muddy children,
	and can be derived from the proof in Theorem~\ref{thm:uppbound}
	using symmetry between \( 0 \) and \( k-1-i \) after the \( i \)
	announcements and monotonicity of knowledge of atoms: if the depths are
	lower than they were in the previous proof, there are more states and
	more links in the updated model and therefore \( \neg K_{k-1-i} m_{k-1-i}
	\) remains true.

	Therefore in this model after \( i \) announcements, either \( s_k \) or
	\( \tilde{s}_k \) sees agent \( 0 \) of depth \( 0 \) and both states
	are still connected by \( \sim_0 \).
	This means that for the next announcement, since \( \neg K_0 m_0 \) after
	each announcement except potentially the last using the same argument as
	above, we will have the chain of connections \( (1, s_k) \sim'_0 (0, s_k)
	\sim'_0 (0, s_k') \) or \( (1, s_k) \sim'_0 (1, \tilde{s}_k) \sim'_0 (0,
	\tilde{s}_k) \).
	This means that by an immediate induction, after the \( k-i \)
	announcements it is still true that \( \neg K_0 m_0 \): this is a
	contradiction with \( \varphi_k \).
\end{proof}
A stronger lower bound for each child is
available~\cite{DBLP:journals/corr/abs-2305-08607}, with recursive conditions on
the depth of all agents similarly to Theorem~\ref{thm:uppbound}.
This formula provides a lower bound on the knowledge of depths of the agent \( 0 \) to be able to solve the problem: it must be depth at least \( k-1 \) and know so.
By symmetry, this generalizes to any child or any set of children solving the problem.

Finally, we present propositions that illustrate how \textit{amnesia} in~\TSo{}
(Proposition~\ref{prop:amnesia}) and \textit{knowledge leakage} in~\TSt{}
(Proposition~\ref{prop:leakage}) manifest in the muddy children problem.
These propositions are easily verified by computing explicitly the models after updates.
\begin{prop}[Amnesia in~\TSo{}]
	Consider the instance of muddy children \( M_3 \), where child \( i
	\) is unambiguously of depth \( 2-i \), i.e.\ \( d(i, \cdot) = 2-i \).
	The formula
	\(
		\langle \neg K_2 m_2 \rangle \langle \neg K_1 m_1 \rangle \neg
		K_2 \top
	\)
	is true in \TSo{} but not in~\DPAL{} or~\TSt{}.
	This means that in~\TSo{}, after the first two announcements, agent \( 2 \)
	does not know anything anymore.

 \end{prop}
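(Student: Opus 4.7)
My plan is to reduce the formula to a statement about $d(2, \cdot)$ after the two updates and compare the three semantics by direct calculation. The key observation is that $K^\infty_2 \top$ holds trivially, so $K_2 \top$ simplifies to the depth atom $\P{2}{0}$, and $\neg K_2 \top$ is equivalent to $d(2, \cdot) < 0$ in the post-announcement model. The entire question is therefore how each semantics evolves agent $2$'s depth.

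First I verify both announcements succeed. At the initial state $s_3 = (1,1,1)$, $K_2 m_2$ requires $\P{2}{1}$, but $d(2, \cdot) = 0$, so $\neg K_2 m_2$ holds in every semantics. After a single announcement of modal depth $1$, agent $1$'s depth drops from $1$ to $0$ at the evaluation state in each semantics (it is deep enough for the announcement in all three), so $K_1 m_1$ requires $\P{1}{1}$ and fails. Hence $\neg K_1 m_1$ also holds, and the preconditions of the two $\langle \cdot \rangle$ operators are satisfied in all three logics.

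For \TSo{}, the rule $d'(a, s) = d(a, s) - \depth{\varphi}$ decrements agent $2$'s depth unconditionally by $1$ at each step, producing $d(2, s_3) = -2$ after the two announcements. Thus $\P{2}{0}$ fails and $\neg K_2 \top$ holds, establishing the formula in \TSo{}. For \DPAL{} and \TSt{}, the depth update clauses explicitly leave $d(a, s)$ unchanged whenever $d(a, s) < \depth{\varphi}$; since $d(2, \cdot)$ starts at $0 < 1$ and never rises, it remains pinned at $0$. Consequently $\P{2}{0}$ holds, $K_2 \top$ is true, and $\neg K_2 \top$ fails. The main obstacle is careful bookkeeping in \DPAL{}, where one must remember that the evaluation state after an announcement is the positive copy $(1, s)$ and that the model duplication adds a link to the negative copy through agent $2$; but since the \DPAL{} depth rule fixes $d(2, \cdot) = 0$ in both copies, this extra link does not affect the evaluation of $K_2 \top$.
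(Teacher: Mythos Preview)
Your overall strategy is exactly what the paper intends (it says only that ``these propositions are easily verified by computing explicitly the models after updates''), and your core reduction is right: since \( K^\infty_2 \top \) is trivially valid and \( \depth{\top}=0 \), the formula \( \neg K_2 \top \) collapses to \( \neg \P{2}{0} \), and the whole question becomes how each semantics evolves \( d(2,\cdot) \). Your tracking of agent~2's depth in all three logics is correct and suffices for the conclusion.

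However, your verification that the two announcement preconditions succeed is based on a wrong depth computation. You write that ``\( K_2 m_2 \) requires \( \P{2}{1} \)'' and ``\( K_1 m_1 \) requires \( \P{1}{1} \)''. This is false: \( m_i \) is an atom, so \( \depth{m_i}=0 \), and by Definition~\ref{defi:dbel} one has \( K_i m_i \leftrightarrow \P{i}{0} \wedge K^\infty_i m_i \). Both \( \P{2}{0} \) and \( \P{1}{0} \) \emph{do} hold at the relevant states (agent~2 starts at depth~0, agent~1 drops to depth~0 after the first announcement). The actual reason \( \neg K_2 m_2 \) holds at \( s_3=(1,1,1) \) is the classical one: agent~2 considers \( (1,1,0) \) possible, so \( K^\infty_2 m_2 \) fails. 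Likewise, after the first announcement the state \( (1,0,1) \) (in the appropriate copy) remains \( \sim_1 \)-accessible from the evaluation state in each of the three semantics, so \( K^\infty_1 m_1 \) fails and hence \( \neg K_1 m_1 \) holds.

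This slip is local and does not contaminate the main argument: the announcement depths are indeed~1, the depth bookkeeping for agent~2 is correct, and your conclusions for \TSo{}, \DPAL{}, and \TSt{} stand. Just replace the faulty ``\( \P{i}{1} \) fails'' justifications with the standard muddy-children observation that \( K^\infty_i m_i \) fails.
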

 \begin{prop}[Knowledge leakage in~\TSt{}]\label{prop:badmuddy}
	The formula
	\( \langle K_1 \neg K_2 m_2 \rangle K_1 K_0 m_0 \)
    is true in~\TSt{} but not in~\DPAL{} or~\TSo{}.
	In~\TSt{}, agent \( 1 \) has deduced the conclusion of agent \( 0
	\)'s reasoning, despite not being deep enough to perceive the
	announcement.
	Moreover, if agent \( 0 \) were of depth \( 1 \) it would not be true
	that \( \langle K_1 \neg K_2 m_2 \rangle K_0 m_0 \): agent \( 0 \) would
	not be able to deduce what agent \( 1 \) has deduced.
\end{prop}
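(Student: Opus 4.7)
The plan is to verify the stated (non-)satisfactions directly on a specific muddy children instance, in the spirit of the proof of Proposition~\ref{prop:leakage}. I adopt the depth assignment of the preceding Proposition~\ref{prop:amnesia}: \( \Mhn \) with \( n = 3 \) and unambiguous depths \( d(i, \cdot) = 2-i \), and evaluate at the all-muddy state \( s = (1,1,1) \). The announcement \( \varphi := K_1 \neg K_2 m_2 \) has depth~\( 2 \), so agent~\( 0 \) (depth~\( 2 \)) is exactly deep enough to perceive it while agent~\( 1 \) (depth~\( 1 \)) is not; crucially, agent~\( 1 \) remains deep enough to evaluate \( K_0 m_0 \) (depth~\( 1 \)) in whichever semantics preserves its depth, which is precisely what makes the alleged leakage observable.

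Next I would pin down the truth set of \( \varphi \). Since \( {\{0\}}^3 \) is excluded from the model, \( K_2 m_2 \) holds only at the \( \sim_2 \)-singleton \( (0,0,1) \); propagating \( \neg K_2 m_2 \) through \( \sim_1 \) then shows that \( \varphi \) fails at exactly \( (0,0,1) \) and \( (0,1,1) \), and in particular \( (M, s) \models \varphi \). For~\TSt{}, agent~\( 0 \) satisfies \( \P{0}{2} \) everywhere, so \( \sim'_0 \) removes every \( \sim_0 \)-edge crossing the partition induced by \( \varphi \): in particular \( (1,1,1) \) and \( (1,0,1) \) become singleton \( \sim'_0 \)-classes, so \( K_0 m_0 \) holds at each. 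Agent~\( 1 \) is too shallow, so \( \sim'_1 = \sim_1 \); at \( s \) it still considers \( \{(1,0,1),(1,1,1)\} \), both satisfying \( K_0 m_0 \), and hence \( K_1 K_0 m_0 \) holds at \( s \) in~\TSt{}.

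For~\TSo{}, agent~\( 1 \)'s depth drops to \( -1 \) after the announcement, so \( \P{1}{1} \) fails and \( K_1 K_0 m_0 \) is false at \( s \)---this is a direct instance of Proposition~\ref{prop:amnesia}. For~\DPAL{}, the updated model duplicates each state into negative and positive copies, positive copies existing exactly at the \( \varphi \)-states; since \( \neg \P{1}{2} \) holds everywhere, every positive state carries an \( R_1 \)-link to its negative twin. Taking the transitive closure, agent~\( 1 \) from \( (1,s) \) reaches \( (0,(1,0,1)) \), from which agent~\( 0 \)'s inherited relation still connects to \( (0,(0,0,1)) \) where \( m_0 \) is false; hence \( K_0 m_0 \) fails there and \( K_1 K_0 m_0 \) fails at \( (1,s) \). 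The ``moreover'' clause is then verified analogously after resetting \( d(0, \cdot) = 1 \): now agent~\( 0 \) is also too shallow for \( \varphi \), so in~\TSt{} we have \( \sim'_0 = \sim_0 \) and \( (1,1,1) \sim'_0 (0,1,1) \) persists, making \( K_0 m_0 \) fail at \( s \) after the announcement.

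The main obstacle is the~\DPAL{} step: one must carefully trace \( R_1 \)-edges across the cross-component links and through the preserved negative copy of the model, rather than (incorrectly) assuming that agent~\( 1 \)'s reachable set after the announcement stays within the positive component, which would erroneously suggest \( K_1 K_0 m_0 \) holds in~\DPAL{} as well.
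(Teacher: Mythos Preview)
Your proposal is correct and follows exactly the approach the paper indicates: the paper simply states that ``these propositions are easily verified by computing explicitly the models after updates,'' and you carry out precisely that computation on the \( M_3 \) model with \( d(i,\cdot)=2-i \). Your identification of the truth set of \( \varphi \), the \TSt{} update for agents \( 0 \) and \( 1 \), the \TSo{} depth collapse for agent \( 1 \), and the \DPAL{} chain \( (1,s)\sim'_1(0,(1,0,1)) \) followed by \( (0,(1,0,1))\sim'_0(0,(0,0,1)) \) are all accurate, as is the ``moreover'' verification.
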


\paragraph{Library}
Alongside this paper, we publish code for a library for multi-agent epistemic
logic model checking and visualization in Python.
It implements depth-unbounded PAL models as well as~\DPAL{}, \TSo{} and~\TSt{}.
The code is available
\href{https://gitlab.com/farid-fari/depth-bounded-epistemic-logic}{in an online
repository}~\cite{bworld}. %
The code can also be used to generate illustrations of model updates in the muddy
children reasoning problem~\cite{DBLP:journals/corr/abs-2305-08607}
under the assumptions of Theorem~\ref{thm:uppbound} above.

\paragraph{Conclusion}
We have shown how \textbf{S5} and public announcement logic (PAL) can be extended
to incorporate bounded-depth agents.
We have shown completeness results for several of the resulting
logics and explored the relationship between public announcements and knowledge
in~\DPAL{}, as well as complexity bounds for these logics.
We finally illustrated the behavior of depth-bounded agents in the muddy children
reasoning problem, where we showed upper and lower bounds on depths (and
recursive knowledge of depths) necessary and sufficient to solve the problem.
These results extend epistemic logics to support formal reasoning about agents
with limited modal depth.

\nocite{*}
\bibliographystyle{eptcs}
\bibliography{farid}

\appendix{}
\section{Axiomatization proofs}\label{app:ax}
\begin{table}[h]
\centering
 \begin{tabular}{|| r | l ||}
 \hline
 All propositional tautologies & \( p \to p \), \textit{etc}. \\
 Deduction & \( (K^\infty_a \varphi \wedge K^\infty_a (\varphi \to \psi)) \to
 K^\infty_a \psi \) \\
 \hline
 Truth & \( K^\infty_a \varphi \to \varphi \) \\
 Positive introspection & \( K^\infty_a \varphi \to K^\infty_a K^\infty_a \varphi \) \\
 Negative introspection & \( \neg K^\infty_a \varphi \to K^\infty_a \neg K^\infty_a \varphi \) \\
 \hline
 Depth monotonicity & \( \P{a}{d} \to \P{a}{d-1} \) \\
 Exact depths & \( \P{a}{d} \leftrightarrow \neg (E_a^0 \vee \cdots \vee E_a^{d-1})  \) \\
 Unique depth & \( \neg(E_a^{d_1} \wedge E_a^{d_2}) \) for \( d_1 \neq d_2 \) \\
 \hline
 Bounded knowledge & \( K_a \varphi \leftrightarrow \P{a}{\depth{\varphi}} \wedge K^\infty_a
 \varphi \) \\
 \hline
 \hline
 \textit{Modus ponens} & From \( \varphi \) and \( \varphi \to \psi \), deduce \(
 \psi \) \\
 Necessitation & From \( \varphi \) deduce \( K^\infty_a \varphi \) \\
 \hline
 \end{tabular}
	\caption{Sound and complete axiomatization of \DBEL{} over \( \H^\infty \).}\label{tab:axb2}
\end{table}
\begin{thm}\label{thm:altaxb}
	Axiomatization from Table~\ref{tab:axb2} is sound and complete with
	respect to \DBEL{} over \( \H^\infty \).
\end{thm}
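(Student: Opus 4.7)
The plan is to prove soundness and completeness separately, following the standard canonical model construction for \textbf{S5} but with additional care for depth atoms. For soundness, I would verify each axiom semantically. The classical S5 axioms (deduction, truth, positive and negative introspection) for \( K^\infty_a \) are immediate since \( \sim_a \) is an equivalence relation and the \( K^\infty_a \) semantics is the standard possible-worlds reading. Depth monotonicity follows from \( d \geq d' \Rightarrow d \geq d'-1 \); exact depths and unique depth follow because \( d(a,s) \) is a single natural number. Bounded knowledge is a direct transcription of the clause \( (M,s) \models K_a \varphi \iff (M,s) \models \Paphi \wedge K^\infty_a \varphi \) from Definition~\ref{defi:dbel}. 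Both rules preserve validity: \textit{modus ponens} trivially, and necessitation because \( \sim_a \) relates a state only to states in the same model.

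For completeness, I would adapt the canonical model construction. Let \( W \) be the set of maximal consistent subsets of \( \H^\infty \) (with respect to the axiomatization), and define \( \Gamma \sim_a \Delta \iff \{ \varphi : K^\infty_a \varphi \in \Gamma \} \subseteq \Delta \). The standard S5 argument (using truth, positive, and negative introspection for \( K^\infty_a \)) shows \( \sim_a \) is an equivalence relation. Valuation is \( V(\Gamma) = \Gamma \cap \atoms \). For the depth function, I would set \( d(a, \Gamma) \) to be the unique \( d \) such that \( \E{a}{d} \in \Gamma \); that such a \( d \) exists and is unique is where the strengthened \textsc{Lindenbaum} lemma from Appendix~\ref{app:lind} is essential.

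The key obstacle, and where the stronger \textsc{Lindenbaum} lemma is needed, is precisely this existence: a maximal consistent set is not automatically forced to contain some \( \E{a}{d} \). Indeed, \( \{ \P{a}{d} : d \in \N \} \) is finitely consistent by depth monotonicity, so a naive Zorn-style extension could produce a set asserting the agent has no finite depth, yielding no valid DBEL model. I would invoke the strengthened lemma to extend any consistent set to a maximal consistent set that additionally contains \( \E{a}{d_a} \) for some \( d_a \in \N \) for each agent \( a \); unique depth then guarantees uniqueness, and exact depths ensures the \( \P{a}{d} \) atoms in the set are exactly those with \( d \leq d_a \), so that the semantic clauses for \( \E{a}{d} \) and \( \P{a}{d} \) align with membership.

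The truth lemma \( (M, \Gamma) \models \varphi \iff \varphi \in \Gamma \) then follows by induction on \( \varphi \). Atoms and depth atoms are handled directly by the construction, Boolean connectives by maximal consistency, \( K^\infty_a \) by the standard existence lemma (using the strengthened \textsc{Lindenbaum} lemma to build witnessing \( \sim_a \)-successors when \( K^\infty_a \varphi \notin \Gamma \)), and \( K_a \) by bounded knowledge reducing it to the conjunction of a depth atom case and the \( K^\infty_a \) case. Consistency of a formula then implies satisfiability at the MCS containing it, yielding completeness. The only non-routine piece is the depth-aware \textsc{Lindenbaum} lemma; everything else is a faithful lift of the S5 canonical model argument with the added depth atoms handled by their respective axioms.
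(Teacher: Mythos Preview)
Your approach is essentially the paper's: soundness by direct verification, completeness via a canonical model whose states are maximal consistent sets containing some $E_a^d$ for every agent, with the depth-aware \textsc{Lindenbaum} lemma of Appendix~\ref{app:lind} supplying the depth assignment. The one organizational difference is that the paper first translates every $\H^\infty$-formula into the fragment it calls \textbf{S5D} (eliminating $K_a$ via bounded knowledge and $\P{a}{d}$ via exact depths) and then runs the canonical-model argument there, so that $E_a^d$ is the only genuinely new case in the truth-lemma induction; you instead keep $K_a$ and $\P{a}{d}$ in the language and discharge them directly via those same axioms inside the induction. Both routes are equivalent in substance.

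One point of care: Lemma~\ref{lmm:lindapp} as stated applies only to \emph{finite} consistent sets (and to formulas already in \textbf{S5D}), so invoking it verbatim ``to build witnessing $\sim_a$-successors'' is not quite licit, since $\{\psi : K^\infty_a\psi\in\Gamma\}\cup\{\neg\varphi\}$ is infinite. The paper is equally silent here---it simply declares the $K^\infty_a$ case ``the same as for \textbf{S5}'' despite having restricted the state space---so this is a shared looseness rather than a flaw peculiar to your write-up. If you want the argument airtight, you should add a line arguing that this infinite successor set still extends to a restricted MCS (the relevant observation being that since $\Gamma$ already contains $E_b^{d_b}$ for each $b$, the truth axiom prevents $K^\infty_a\neg E_b^{d_b}\in\Gamma$, so the successor set cannot exclude every finite depth for any agent).
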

\begin{proof}
	Soundness of all of these axioms is immediate: the definition of \(
	K^\infty_a \) follows that of \textbf{S5} and so do the axioms, those
	concerning depth atoms are consequences of linear arithmetic, and the
	bounded knowledge axiom follows immediately from the definition of \( K_a
	\) in the semantics.

	For completeness, first note we can translate any formula \( \varphi \) in
	\( \H^\infty \) into an equivalent formula \( t(\varphi) \) that does not
	contain any \( \P{a}{d} \) atoms or \( K_a \) modal operators using the
	exact depths and bounded knowledge axioms (which we know to be sound).
	Call \textbf{S5D} this fragment of \textbf{DBEL}.

	We will use a proof through the \textsc{Lindenbaum} lemma and the truth
	lemma, to this end we need to complete the definition for the canonical
	model to add a depth function.
	As a reminder, the proof is as follows: if \( \varphi \) cannot be shown
	within the axiomatization in Table~\ref{tab:axb}, i.e. \( \not\vdash
	\varphi \), then we show that \( \not\models \varphi \) by showing there
	is a state in the canonical model in which it does not hold.

	The canonical model \( M^c \) is the model whose states are maximally
	consistent sets \( \Gamma \) of formulas for our axiomatization and whose
	states
	are related by \( \sim_a \) if the set of formulas \( a \) knows is the
	same in both states.
	Its valuation function for atoms \( V(\Gamma) \) is simply the set of
	axioms in \( \Gamma \), i.e. \( \Gamma \cap \atoms \).

	We restrict \( M^c \) to sets \( \Gamma \) that contain at least some \(
	\E{a}{d} \) for each agent \( a \in \agents \) and by the unique depth
	axiom we define \( d(a, \Gamma) = \max \{ d, \; \E{a}{d} \in \Gamma \}
	\), since \( \Gamma \) contains exactly one depth to be consistent.
	This completes \( M^c \) into a \DBEL{} model.

	Since \( \not\vdash \varphi \), the set \( \{ \neg \varphi \} \) is
	consistent for our axiomatization.
	We must now show we can extend this set into a maximal consistent set of
	formulas that contains a depth atom \( \E{a}{d} \) for each agent \( a
	\).

	However, this stronger requirement is not satisfied by the usual
	\textsc{Lindenbaum} lemma, since a consistent set of formulas could be \(
	\{ \P{a}{d}, \; d \in \N \} \) (which is not consistent with any \(
	\E{a}{d} \)).
	Note however we only need it to hold for a finite set of formulas (namely
	\( \{ \neg \varphi \} \)): Lemma~\ref{lmm:lindapp} below proves this
	version of the \textsc{Lindenbaum} lemma, by showing there must exist
	some \( \E{a}{d} \) that is consistent with any finite set for each \( a
	\), and then a maximally consistent set can be derived using the
	traditional \textsc{Lindenbaum} lemma.

	Finally, the truth lemma shows that \( \varphi \in \Gamma \iff (M^c,
	\Gamma) \models \varphi \) by induction on \( \varphi \) and is enough
	to conclude (since the maximal consistent set containing \( \neg \varphi
	\) will not verify \( \varphi \)).
	Most induction cases are the same as for \textbf{S5}, the only new
	symbols left in our formula \( \varphi \) are the \( \E{a}{d} \) atoms,
	and the truth lemma is immediately true for them by definition of the
	depth function of \( M^c \).

	Finally, if \( \models \varphi \), then \( \models t(\varphi) \) by the
	soundness of the axiomatization and definition of the transformation,
	then \( \textbf{S5D} \vdash t(\varphi) \) since we have just shown the
	completeness of this fragment.
	Finally, this must mean \( \DBEL \vdash t(\varphi) \) and then
	\( \vdash \varphi \) since the transformations of \( t \) can be
	performed using equivalences in our axiomatization: we have shown
	completeness.
\end{proof}

\begin{table}
\centering
 \begin{tabular}{|| r | l ||}
 \hline
 All propositional tautologies & \( p \to p \), \textit{etc}. \\
 Deduction & \( (K^\infty_a \varphi \wedge K^\infty_a (\varphi \to \psi)) \to
 K^\infty_a \psi \) \\
 \hline
 Truth & \( K^\infty_a \varphi \to \varphi \) \\
 Positive introspection & \( K^\infty_a \varphi \to K^\infty_a K^\infty_a \varphi \) \\
 Negative introspection & \( \neg K^\infty_a \varphi \to K^\infty_a \neg K^\infty_a \varphi \) \\
 \hline
 Atomic permanence & \( [\varphi] p \leftrightarrow \varphi \to p \) \\
 Depth adjustment & \( \forall d \in \Z, \; [\varphi] \E{a}{d} \leftrightarrow \left(
	 \varphi \to \E{a}{\depth{\varphi} + d} \right) \) \\
 Negation announcement & \( [\varphi] \neg \psi \leftrightarrow (\varphi \to \neg
	 [\varphi] \psi) \) \\
 Conjunction announcement & \( [\varphi] (\psi \wedge \chi) \leftrightarrow
	 ([\varphi] \psi \wedge [\varphi] \chi) \) \\
 Knowledge announcement & \( [\varphi] K^\infty_a \psi \leftrightarrow
	 (\varphi \to K^\infty_a [\varphi] \psi ) \) \\
 Announcement composition & \( [\varphi] [\psi] \chi \leftrightarrow
	 ([\varphi \wedge [\varphi] \psi] \chi) \) \\
 \hline
 Depth monotonicity & \( \P{a}{d} \to \P{a}{d-1} \) \\
 Exact depths & \( \P{a}{d} \leftrightarrow \neg (E_a^0 \vee \cdots \vee E_a^{d-1})  \) \\
 Unique depth & \( \neg(E_a^{d_1} \wedge E_a^{d_2}) \) for \( d_1 \neq d_2 \) \\
 \hline
 Bounded knowledge & \( K_a \varphi \leftrightarrow \P{a}{\depth{\varphi}} \wedge K^\infty_a
 \varphi \) \\
 \hline
 \hline
 \textit{Modus ponens} & From \( \varphi \) and \( \varphi \to \psi \), deduce \(
 \psi \) \\
 Necessitation & From \( \varphi \) deduce \( K^\infty_a \varphi \) \\
 \hline
 \end{tabular}
\caption{Sound and complete axiomatization of~\TSo{}.}\label{tab:ax4}
\end{table}

\begin{thm}\label{thm:ax4}
	The axiomatization in Table~\ref{tab:ax4} is sound and complete with
	respect to~\TSo{}.
\end{thm}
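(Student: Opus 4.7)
The plan is to prove soundness and completeness separately, using a standard reduction-axioms strategy for completeness that reduces \TSo{} to its announcement-free fragment (\DBEL), where Theorem~\ref{thm:altaxb} applies. For soundness, the non-announcement axioms of Table~\ref{tab:ax4} are exactly those of Table~\ref{tab:axb2}, whose validity in \DBEL{} was already shown in Theorem~\ref{thm:altaxb}; since the semantic clauses for these connectives in \TSo{} at a fixed model coincide with those in \DBEL, nothing new is needed. For each public-announcement axiom I would unfold $(M,s) \models [\varphi]\psi \iff ((M,s) \models \varphi \implies (M \mid \varphi, s) \models \psi)$ from Definition~\ref{defi:sem1}. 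Atomic permanence, negation, conjunction, and knowledge announcement follow exactly as in classical PAL, since the \TSo{} update simply restricts the state set to $\{s : (M,s) \models \varphi\}$ while preserving $\sim_a$ (by restriction) and $V$. Depth adjustment follows immediately from $d'(a,s) = d(a,s) - d(\varphi)$. Announcement composition reduces to checking $(M \mid \varphi) \mid \psi = M \mid (\varphi \wedge [\varphi]\psi)$, together with the depth calculation $d(\varphi \wedge [\varphi]\psi) = \max(d(\varphi), d(\varphi) + d(\psi)) = d(\varphi) + d(\psi)$, so iterated depth decrements collapse correctly.

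For completeness, I would argue that the six public-announcement axioms are \emph{reduction axioms}: they push $[\varphi]$ through atoms, depth atoms, negation, conjunction, and $K^\infty_a$, and collapse nested announcements. Following the classical PAL strategy of Kooi and van Benthem~\cite{DBLP:conf/aiml/KooiB04}, I would equip $\L^\infty$ with a well-founded complexity measure $c$ such that the right-hand side of each reduction axiom has strictly smaller $c$-value than its left-hand side. A straightforward induction on $c$ then yields, for every $\chi \in \L^\infty$, an announcement-free $t(\chi) \in \H^\infty$ together with a derivation $\vdash \chi \leftrightarrow t(\chi)$ in Table~\ref{tab:ax4}. Completeness of \TSo{} is then immediate: if $\models \chi$ then by soundness $\models t(\chi)$, so by Theorem~\ref{thm:altaxb} we have $\vdash t(\chi)$ using axioms from Table~\ref{tab:axb2} (all of which appear in Table~\ref{tab:ax4}), and composing with the reduction gives $\vdash \chi$.

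The main obstacle will be defining $c$ so that it decreases under announcement composition, whose right-hand side $[\varphi \wedge [\varphi]\psi]\chi$ still contains an inner announcement $[\varphi]\psi$ appearing inside the guard. The standard remedy, which I would adopt, is to assign each announcement $[\alpha]$ a weight that multiplies rather than adds the weight of $\alpha$, so that nesting two announcements costs strictly more than combining their guards into one. Since the syntactic shape of the \TSo{} composition axiom is identical to the PAL one, the existing analysis transfers verbatim; the only additional case in the induction is depth atoms $\E{a}{d}$, which behave exactly like propositional atoms for the purposes of the reduction. Verifying the strict-decrease property for each of the six reduction axioms is then a short mechanical case analysis that I would carry out to finish the argument.
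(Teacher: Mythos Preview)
Your proposal is correct and follows essentially the same reduction-to-\DBEL{} strategy as the paper's proof, which likewise translates every formula to the announcement-free fragment via the reduction axioms of Table~\ref{tab:ax4} and then invokes Theorem~\ref{thm:altaxb}. One small point the paper makes explicit and you leave implicit: Table~\ref{tab:ax4} has no direct reduction axiom for $[\varphi]K_a\psi$ or $[\varphi]\P{a}{d}$, so before running your inside-out reduction you must first eliminate $K_a$ and $\P{a}{d}$ via the bounded-knowledge and exact-depths axioms (the paper in fact reduces all the way to the fragment \textbf{S5D} without $K_a$ or $\P{a}{d}$, rather than stopping at $\H^\infty$).
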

\begin{proof}
	Soundness of the axioms of~\DBEL{} is proven in Theorem~\ref{thm:altaxb}.
	Soundness of all axioms for public announcement is also a consequence of
	their definition in PAL with which they share their definition, except
	for depth adjustment for which the proof is relatively immediate.

	For completeness, we translate any formula \( \varphi \) into
	\( t(\varphi) \) by removing public announcements, \( K_a \) modal
	operators and \( \P{a}{d} \) atoms by using the sound axioms
	from Table~\ref{tab:ax4}.
	The formula \( t(\varphi) \) is in the syntactic fragment
	\textbf{S5D}, thus we can use completeness shown
	in Theorem~\ref{thm:axb} to show \( \vdash t(\varphi) \), which implies
	\( \vdash \varphi \) within the axiomatization of Table~\ref{tab:ax4} by
	using the same axioms in the opposite direction.
\end{proof}

\section{\textsc{Lindenbaum} lemma with depth assignments}\label{app:lind}
\begin{lmm}\label{lmm:lindapp}
For every agent \( a \) and finite consistent set of formulas \( \Gamma \)
without public announcement, \( \P{b}{d} \) literals or \( K_b \) operators for
all \( b \), there exists some \( d \in \N \) such that \( \Gamma \cup \{
\E{a}{d} \} \) is a consistent set.
\end{lmm}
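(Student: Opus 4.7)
My plan is to prove the lemma via a semantic argument that invokes a restricted form of completeness as a black box. First, since $\Gamma$ contains no $\P{b}{d}$ literals, no $K_b$ operators, and no public announcements, it lies in a sub-language $\L_0$ whose only non-propositional symbols are the $\E{b}{d}$ atoms and the $K^\infty_b$ operators. The axioms of Table~\ref{tab:axb2} expressible in $\L_0$ are the propositional tautologies, the \textbf{S5} axioms for $K^\infty_b$, and unique depth, together with modus ponens and necessitation. The remaining axioms (exact depths and bounded knowledge) are definitional biconditionals for $\P{b}{d}$ and $K_b$, making the full axiomatization a conservative definitional extension over $\L_0$. Consistency of $\Gamma$ in the full axiomatization therefore transfers to consistency in this sub-axiomatization.

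Second, I would prove completeness of the sub-axiomatization with respect to an intermediate Kripke semantics of structures $(\states, \sim, V)$ whose $\sim_b$ are equivalence relations and whose valuations $V$ respect unique depth (at most one $\E{b}{d}$ per agent per state), but without imposing a total depth function---so a state may contain no $\E{b}{d}$ for some agent. This is a routine adaptation of multi-agent \textbf{S5} completeness via the ordinary \textsc{Lindenbaum} lemma and the canonical-model construction. The critical point is that because the intermediate semantics tolerates states with undefined depths, the standard Lindenbaum lemma suffices here, sidestepping the circularity with the strengthened version being proved.

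Third, given an intermediate model $M$ with $(M, s_0) \models \Gamma$, I would lift it to a \DBEL{} model $M'$ by fixing depths. Let $D$ be the finite set of depth indices appearing syntactically in $\Gamma$. At each state $s$ and each agent $b$: if some $\E{b}{d}$ already holds at $s$, set $d'(b, s) = d$; otherwise pick a fresh $d_b \notin D$, set $d'(b, s) = d_b$, and add $\E{b}{d_b}$ to $V(s)$. Every newly activated atom uses a depth outside $D$ and thus does not appear syntactically in $\Gamma$, so the truth of every subformula of $\Gamma$ is preserved and $(M', s_0) \models \Gamma$. Setting $d^* := d'(a, s_0)$, we obtain $(M', s_0) \models \Gamma \cup \{ \E{a}{d^*} \}$, and soundness of the full axiomatization with respect to \DBEL{} semantics (immediate from Definition~\ref{defi:dbel}) yields the desired consistency of $\Gamma \cup \{ \E{a}{d^*} \}$.

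The main obstacle is the restricted completeness step: one must verify that unique depth does not obstruct the truth lemma at canonical states that contain no $\E{b}{d}$ for some agent $b$. Because unique depth is purely a propositional constraint and the remaining axioms are those of multi-modal \textbf{S5}, the standard Henkin argument applies directly, without invoking the lemma being proved, which is precisely what breaks the circularity at the heart of the strengthened \textsc{Lindenbaum} statement.
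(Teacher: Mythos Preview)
Your proposal is correct and follows the same core strategy as the paper: restrict to the $K^\infty$/depth-atom fragment, invoke standard multi-modal \textbf{S5} completeness with unique depth as the only extra constraint, and then assign a fresh depth index outside the finite set $D$ of depths occurring in $\Gamma$. The packaging differs in two places. First, the paper adds the relevant unique-depth instances to $\Gamma$ as formulas and treats the $E_a^d$ as ordinary propositional atoms in plain \textbf{S5}, whereas you bake unique depth into an intermediate semantics and prove completeness for that class directly; both routes deliver a satisfying pointed model in which at most one $E_a^d$ holds at each state. Second, and more substantively, the paper closes with a syntactic argument that any putative inconsistency proof of $\Gamma \cup \{E_a^{d_0}\}$ in the full system can be rewritten inside \textbf{S5} (by eliminating $P_a^d$, $K_a$, and replacing unused $E_a^d$ by $\bot$), whereas you lift the intermediate model to a genuine \DBEL{} model and invoke soundness of the full axiomatization. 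Your semantic closing is cleaner and avoids the somewhat informal proof-rewriting step in the paper; conversely, the paper's route avoids having to verify that the canonical model for the sub-system lands in the intermediate class. Note also that your conservativity remark is stronger than you need: the transfer of consistency from the full system to the sub-system is trivial since the latter has strictly fewer axioms.
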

\begin{proof}
	Fix agent \( a \).
	As \( \Gamma \) is a finite set of finite formulas, the set of exact
	depth atoms for \( a \) that appear in its formulas is included in a
	finite set \( F = \{ \E{a}{0}, \ldots, \E{a}{D} \} \) for some \( D \in
	\N \).

	We can add to \( \Gamma \) instances of the unique depth axiom for each
	pair of integers in \( [| 0; D |] \) while maintaining consistency.
	The set \( \Gamma \) can then be seen as a consistent set of formulas for
	\textbf{S5} over the set of atoms \( F \cup \atoms \), i.e.\ consistent
	in the axiomatization of Table~\ref{tab:axb2} without depth axioms or
	bounded knowledge (or tautologies involving symbols not in the language
	of \textbf{S5}).
	Therefore there is an \textbf{S5} model \( (M, s) \) that satisfies it
	by the usual \textsc{Lindenbaum} lemma and the truth lemma (the canonical
	model here).

	In \( (M, s) \), if any of the \( \E{a}{d} \) are valued to \( \top \),
	then at most one of them is satisfied (since we added the unique depth
	axiom for all pair of depths).
	If all of the \( \E{a}{d} \) are valued to \( \bot \), then we can
	introduce a new atom \( \E{a}{D+1} \) and set its value to \( \top \) in
	all states of the model.
	All of the unique depths axioms for \( D+1 \) and \( d \leq D \) can be
	added to \( \Gamma \) without making it inconsistent.

	In both cases, let \( d_0 \) be the value of the unique \( \E{a}{d_0} \)
	valued to \( \top \) in this final model.
	We claim that \( \{ \varphi, \E{a}{d_0} \} \) must be a consistent set.
	Indeed, a proof of its inconsistency with the axioms from
	Table~\ref{tab:axb2} must only involve axioms from \textbf{S5} and unique
	depths axioms for the set \( F \), since none of the symbols \(
	\P{a}{d} \) or \( K_a \) are necessary in a proof (they can be replaced
	by their equivalents with \( \E{a}{d} \) and \( K^\infty_a \) without
	changing the conclusion) and any occurrence of \( \E{a}{d} \) for \( d >
	D+1 \) can be replaced by \( \bot \) while maintaining the truthfulness
	and conclusion of the proof.

	Therefore, such an inconsistency proof would also hold within
	\textbf{S5}, which is a contradiction with soundness since these
	formulas are verified in a consistent set (the set of true formulas in \(
	(M, s) \)).
\end{proof}

\section{Proofs for Section~\ref{sec:dpal}}\label{app:props}
\begin{prop}\label{prop:col3app}
	Formulas~\KI{} and~\TAz{} are valid for \DPAL{} in the unambiguous depths
	setting.
\end{prop}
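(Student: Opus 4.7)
The plan is to prove both properties by structural induction on $\psi$, using the unambiguous depth assumption to control the shape of $\sim'_a$ in $M \mid \varphi$. Since $d(a, \cdot)$ is constant on every $\sim_a$-class, the guard $\P{a}{\depth{\varphi}}$ (respectively $\neg \P{a}{\depth{\varphi}}$) at $s$ propagates to every $s' \sim_a s$, so $a$ is either deep enough everywhere in its local class or shallow everywhere. This dichotomy fully determines which $R_a$-edges between the positive and negative copies of $M$ are present in $M \mid \varphi$.

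For \TAz{}, when $a$ is deep enough for $\varphi$ throughout its local class, no $R_a$-edge from $(1, s)$ to any $(0, \cdot)$ is created, so the $\sim'_a$-class of $(1, s)$ is exactly $\{(1, s') : s' \sim_a s \text{ and } (M, s') \models \varphi\}$, and $d'(a, (1, s)) = d(a, s) - \depth{\varphi}$. The reduction $[\varphi] K_a \psi \leftrightarrow (\varphi \to K_a [\varphi] \psi)$ then collapses to the familiar PAL equivalence: the universally quantified $s'$ on the right matches the universally quantified $(1, s')$ on the left, while the depth bookkeeping $d(a, s) \geq 1 + \depth{\varphi} + \depth{\psi}$ needed for $K_a [\varphi] \psi$ coincides with $d'(a, (1, s)) \geq 1 + \depth{\psi}$ needed for $K_a \psi$ inside $M \mid \varphi$. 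The induction over $\psi$ is then routine, mirroring the standard PAL argument.

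For \KI{}, the restriction $\psi \in \L_a^\infty$ is essential: since no other agent and no depth atom occurs, the only things $\psi$ can distinguish are propositional valuations inside $a$'s $\sim_a$-class together with $a$'s own depth. I would first establish the auxiliary claim that when $a$ is unambiguously shallow for $\varphi$ at $s$, the correspondence $s' \leftrightarrow (0, s')$ (and, when $(M, s') \models \varphi$, also $s' \leftrightarrow (1, s')$) satisfies, for every $\chi \in \L_a^\infty$, $(M, s') \models \chi$ iff the corresponding point in $M \mid \varphi$ satisfies $\chi$. Valuations and depths are preserved by construction; for $K_a$ and $K^\infty_a$ the $\sim'_a$-class of $(1, s)$ consists of every $(0, s')$ with $s' \sim_a s$ together with the $(1, s')$ among them that satisfy $\varphi$, so the universal quantifiers over $\sim'_a$ and $\sim_a$ range over equivalent sets of points for $\L_a^\infty$ formulas.

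The main obstacle will be the nested announcement case $\psi = [\chi_1] \chi_2$, where one must compare $\chi_2$ at $(1, s)$ in $M \mid \chi_1$ with $\chi_2$ at $(1, (0, s))$ in $(M \mid \varphi) \mid \chi_1$: these models are structurally different and no announcement composition axiom is available in \DPAL{}. I would resolve this by strengthening the auxiliary claim into a bisimulation invariant preserved by further announcements: the duplication correspondence induces a natural bijection between the $\sim_a$-classes of $M \mid \chi_1$ and the $\sim'_a$-classes of $(M \mid \varphi) \mid \chi_1$ lying over $a$'s original class, preserving valuations and $a$'s depth. The verification uses unambiguous depth at each stage, together with the case split on whether $a$ is deep enough for $\chi_1$: in both branches the update acts identically on the two sides of the bisimulation, so the induction closes.
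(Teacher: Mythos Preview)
Your overall strategy matches the paper's: use unambiguous depths to pin down the $\sim'_a$-class of $(1,s)$ in the two regimes, then for \KI{} run a structural induction on $\psi$ with a strengthened hypothesis to absorb nested announcements. Two points are worth flagging.

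For \TAz{} you are doing more work than necessary. No induction on $\psi$ is required: once you have identified the $\sim'_a$-class of $(1,s)$ as $\{(1,s'):s'\sim_a s,\ (M,s')\models\varphi\}$, the semantic unfolding of $[\varphi]K_a\psi$ and of $K_a[\varphi]\psi$ give \emph{literally the same} universal condition $\forall s'\sim_a s,\ (M,s')\models\varphi\Rightarrow(M\mid\varphi,(1,s'))\models\psi$, plus a depth requirement on each side. Those depth requirements are $d'(a,(1,s))\ge d(\psi)$ and $d(a,s)\ge d(\varphi)+d(\psi)$ respectively (your ``$+1$'' on both is a slip), and they coincide because $d'(a,(1,s))=d(a,s)-d(\varphi)$. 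That finishes \TAz{} outright.

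For \KI{}, your bisimulation-preserved-under-announcements idea is exactly what the paper makes explicit: it strengthens the invariant to quantify over \emph{all finite sequences} of subsequent announcements $\psi_1,\ldots,\psi_n$, carrying the assumption that $a$ has depth at least $d(\psi_1)+\cdots+d(\psi_n)+d(\psi)$ at the relevant state. That depth hypothesis is what makes your ``case split on whether $a$ is deep enough for $\chi_1$'' unnecessary---the outer $K_a$ already supplies enough depth budget for every announcement nested inside $\psi$, so only the deep branch occurs. One small correction: $\L_a^\infty$ in the paper still contains depth atoms \emph{for $a$}; their invariance follows because $a$ is shallow for $\varphi$ throughout its class, so $d'(a,(j,s'))=d(a,s')$.
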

\begin{proof}
To prove~\KI{}, suppose without loss of generality that \( (M, s) \models \neg
\P{a}{\depth{\varphi}} \wedge \varphi \).
In particular, this means that in \( M \mid \varphi \), we have \( (0, s) \sim'_a
(1, s) \) and therefore the equivalence class of \( (1, s) \) in \( M \mid
\varphi \) contains all \( (0, s') \) whenever \( s' \sim_a s \).
Then,
\begin{align}\label{eq:rewrLHS}
(M, s) \models [\varphi] K_a \psi
& \iff (M, s) \models \varphi \implies (M \mid \varphi, (1, s)) \models K_a \psi
	\notag \\
& \iff (M \mid \varphi, (1, s)) \models \P{a}{\depth{\psi}} \text{ and } \forall
	s', j, (j, s') \sim'_a (1, s) \implies (M \mid \varphi, (j, s')) \models
	\psi \notag \\
& \iff (M \mid \varphi, (1, s)) \models \P{a}{\depth{\psi}} \text{ and } \forall
	s' \sim_a s, \;
\begin{cases}
(M \mid \varphi, (0, s')) \models \psi \\
(M, s') \models \varphi \implies (M \mid \varphi, (1, s')) \models \psi.
\end{cases}
\end{align}

On the other hand,
\begin{equation}\label{eq:rewrRHS}
(M, s) \models K_a \psi
\iff (M, s) \models \P{a}{\depth{\psi}} \text{ and } \forall s', s' \sim_a
s \implies (M, s') \models \psi.
\end{equation}

We prove by structural induction over \( \psi \in \Ha \) the stronger
equivalence,
\begin{equation}\label{eq:ihki}
\forall s' \sim_a s, \quad
\begin{cases}
	(M, s') \models \psi \iff (M \mid \varphi, (0, s')) \models \psi \\
	(M, s') \models \varphi \implies \left( (M, s') \models \psi \iff
	(M \mid \varphi, (1, s')) \models \psi \right).
\end{cases}
\end{equation}
Given that \( (M, s) \not\models \Paphi \), we have \( (M \mid \varphi, (1, s))
\models \P{a}{\d{\psi}} \iff (M, s) \models \P{a}{\d{\psi}} \).
Therefore the depth conditions in equations~\eqref{eq:rewrLHS}
and~\eqref{eq:rewrRHS} are the same and since both sides are true if \( (M, s)
	\not\models \varphi \), equation~\eqref{eq:ihki} is enough to prove~\KI{}.

For \( \psi \in \atoms \), it is true because \( V'((j, s')) = V(s') \) for all
\( j \) and \( s' \) (note that \( \atoms \) does not include depth atoms).
For depth atoms about \( a \), it is a consequence of \( (M, s) \models K_a \neg
	\Paphi \) by the depth unambiguity condition~\eqref{eq:unam}, which means
	the depth of \( a \) is unchanged in all \( s' \sim_a s \) after the
	model update.

The cases where \( \psi = \psi_1 \wedge \psi_2 \) and \( \psi = \neg \chi \) are
immediate, by the way these operators coincide with the usual propositional logic
definition on both sides of the equivalences.

If \( \psi = K_a \chi \) and \( s' \sim_a s \), recall that by the depth
unambiguity condition~\eqref{eq:unam} we have \( (M, s') \models \neg \Paphi \).
 Therefore, if \( (M, s') \models \varphi \),
\begin{align*}
(M \mid \varphi, (1, s')) \models \psi
& \iff d(a, s') \geq \d{\chi} \; \text{ and } \; \forall (j, s'') \sim'_a (1, s'), \;
	(M \mid \varphi, (j, s'')) \models \chi \\
& \iff d(a, s') \geq \d{\chi} \; \text{ and } \; \forall s'' \sim_a s, \; \begin{cases}
	(M \mid \varphi, (0, s'')) \models \chi \\
	(M, s'') \models \varphi \implies (M \mid \varphi, (1, s'')) \models \chi
	\end{cases} \\
& \iff d(a, s') \geq \d{\chi} \; \text{ and } \; \forall s'' \sim_a s', \;
	(M, s'') \models \chi \\
& \iff (M, s') \models \psi,
\end{align*}
where we have used the induction hypothesis~\eqref{eq:ihki} for \( \chi \) once
in each direction.
The first equivalence in equation~\eqref{eq:ihki} is even easier to verify, by
the same technique.
The case for \( \psi = K^\infty_a \chi \) is directly implied by this proof, as
there are no depth conditions to verify.

To prove public announcements, we will need a stronger induction hypothesis than~\eqref{eq:ihki}.
Write for any \( s \), \( 1_0(s) = s \) and \( 1_n(s) = (1, 1_{n-1}(s)) = (1, \ldots, (1, s)) \).
We posit,
\begin{align}
& \forall n \in \N, \; \forall \psi_1, \ldots, \psi_n, \; \forall s' \sim_a s, \;
(M, s') \models \P{a}{\d{\psi_1} + \cdots \d{\psi_n} + \d{\psi}} \text{ and }
(M, s') \models \neg \Paphi \implies \notag \\
& (M, s') \models \psi_1 \text{ and } 
(M \mid \psi_1, (1, s')) \models \psi_2 \text{ and }
\ldots 
\text{ and } 
(M \mid \psi_1 \mid \cdots \mid \psi_{n-1}, 1_{n-1}(s')) \models \psi_n \implies \notag \\
&
\begin{cases}
	(M \mid \psi_1 \mid \cdots \mid \psi_n, 1_n(s')) \models \psi \iff (M \mid \varphi  \mid \psi_1 \mid \cdots \mid \psi_n, 1_n((0, s'))) \models \psi \\
	(M, s') \models \varphi \implies \left( (M \mid \psi_1 \mid \cdots \mid \psi_n, 1_n(s')) \models \psi \iff
	(M \mid \varphi \mid \psi_1 \mid \cdots \mid \psi_n, 1_n((1, s'))) \models \psi \right).
\end{cases}\label{eq:ihkistr}
\end{align}
Note we slightly abuse notation here and some of these states might not exist,
the convention is that the equivalences need only hold when the states exist in
the models on both sides.
The implicant implies that the left-hand term always exists.

Taking this for \( n = 0 \) is sufficient to conclude on~\KI{}, since both equations~\eqref{eq:rewrLHS} and~\eqref{eq:rewrRHS} will be false whenever \( (M, s) \not\models \P{a}{\d{\psi}} \).

The cases for atoms, negations and conjunction are clear for the same reasons as they were in equation~\eqref{eq:ihki}.
The case for depth atoms for \( a \) is direct, since the assumption \( (M, s')
\models \P{a}{\d{\psi_1} + \cdots \d{\psi_n}} \) implies that the depth of \( a
\) after the \( \psi_1, \ldots, \psi_n \) announcements is its initial depth
minus the sum of the depths of all the announcements, and the assumption that it
is not deep enough for \( \varphi \) means its depth does not change with the
announcement of \( \varphi \).

The case for modal operators \( K_a \) relies on the fact that depth atoms are
preserved (by the induction hypothesis for depth atoms) and the relations verify
in \( M \mid \psi_1 \mid \cdots \mid \psi_n \) when these states exist,
\[
(1, (1, \ldots (1, s_1))) \sim^{(n)}_a (1, (1, \ldots (1, s_2)))
\iff
s_1 \sim_a s_2,
\]
by denoting \( \sim^{(k)}_a \) the relation for \( a \) in a model after \( k \)
announcements.
And similarly in \( M \mid \varphi \mid \psi_1 \mid \cdots \mid \psi_n \),
\[
(1, (1, \ldots (j, s_1))) \sim^{(n+1)}_a (1, (1, \ldots (k, s_2)))
\iff
(j, s_1) \sim'_a (k, s_2)
\iff
s_1 \sim_a s_2.
\]
This also implies the case for \( K^\infty_a \), since the verification is the same without the depth condition.

Finally, if \( \psi = [\psi'] \chi \), we verify that for \( s' \sim_a s \) such that \( (M, s') \models \varphi \),
\begin{align}
& (M \mid \psi_1 \mid \cdots \mid \psi_n, 1_n(s')) \models \psi \notag \\
& \iff (M \mid \psi_1 \mid \cdots \mid \psi_n, 1_n(s')) \models \psi' \implies (M \mid \psi_1 \mid \cdots \mid \psi_n \mid \psi', 1_{n+1}(s')) \models \chi \notag \\
& \iff (M \mid \psi_1 \mid \cdots \mid \psi_n, 1_n(s')) \models \psi' \implies (M \mid \varphi \mid \psi_1 \mid \cdots \mid \psi_n \mid \psi', 1_{n+1}((1, s'))) \models \chi \notag \\
& \iff (M \mid \varphi \mid \psi_1 \mid \cdots \mid \psi_n, 1_n((1, s'))) \models \psi' \implies (M \mid \varphi \mid \psi_1 \mid \cdots \mid \psi_n \mid \psi', 1_{n+1}((1, s'))) \models \chi \notag \\
& \iff (M \mid \varphi \mid \psi_1 \mid \cdots \mid \psi_n, 1_n((1, s'))) \models \psi
	\label{eq:kipa}
\end{align}
when the latter state exists.
Our first use of the induction hypothesis on \( \chi \) is justified because the
left-hand side of the implication is the \( n+1 \) term in the assumptions for
the induction hypothesis in equation~\eqref{eq:ihkistr} (and \( \d{\psi} =
\d{\psi'} + \d{\chi} \)).
The second use of the induction hypothesis on \( \psi' \) is justified for the
same depth reason and the other assumptions remain the same.
Once more the case for \( (0, s') \) is very similar.

For~\TAz{}, we assume without loss of generality that \( (M, s) \models K_a(
\P{a}{\depth{\varphi}}) \wedge \varphi \) (using the depth unambiguity
condition~\eqref{eq:unam}), this means in particular
the equivalence class of \( (1, s) \) in \( M \mid \varphi \) is \( \{ (1, s'),
\; s' \sim_a s, \; (M, s') \models \varphi \} \) since no state
equivalent to \( s \) by \( \sim_a \) has \( a \) not deep enough for \( \varphi
\).
Using the same reasoning as in equation~\eqref{eq:rewrLHS}, we have,
\[
(M, s) \models [\varphi] K_a \psi \iff
(M \mid \varphi, (1, s)) \models \P{a}{\d{\psi}} \text{ and }
\forall s' \sim_a s, \; (M, s') \models \varphi \implies (M \mid \varphi, (1, s')) \models \psi.
\]
Moreover, we have,
\begin{equation}\label{eq:rewrta}
(M, s) \models K_a [\varphi] \psi \iff
(M, s) \models \P{a}{\d{\varphi} + \d{\psi}} \text{ and }
\forall s' \sim_a s, \;
(M, s') \models \varphi \implies (M \mid \varphi, (1, s')) \models \psi.
\end{equation}
Since \( (M, s) \models \Paphi \), the depth of \( a \) in \( (M \mid \varphi,
(1, s)) \) is its depth in \( (M, s) \) minus \( \d{\varphi} \).
This means that
\[ (M \mid \varphi, (1, s)) \models \P{a}{\d{\psi}} \iff (M, s)
\models \P{a}{\d{\varphi} + \d{\psi}}. \qedhere \]
\end{proof}

\begin{prop}\label{prop:propsstrapp}
	\DPAL{} verifies~\KIp{} and~\TA{}.
\end{prop}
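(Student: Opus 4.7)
The plan is to prove both~\KIp{} and~\TA{} by structural induction on \( \psi \), mirroring the strategy of Proposition~\ref{prop:col3app} but with a strengthened induction hypothesis that accounts for the fact that an agent's equivalence class in \( M \mid \varphi \) may now straddle both the negative and positive copies of the original worlds. For both axioms, I would begin by unfolding the semantics of \( [\varphi] K_a \psi \) into a condition on the equivalence class of \( (1,s) \) under \( \sim'_a \), separating the depth check from the forall-successor check, exactly as in equations~\eqref{eq:rewrLHS} and~\eqref{eq:rewrta}.

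For~\TA{}, the assumption \( K^\infty_a(\varphi \to \P{a}{\d{\varphi}}) \) rules out any \( s' \sim_a s \) with \( (M, s') \models \varphi \wedge \neg \Paphi \), so no \( R_a \)-bridge between the positive and negative copies appears in the equivalence class of \( (1,s) \). The class of \( (1,s) \) in \( M \mid \varphi \) is therefore \( \{(1,s') : s' \sim_a s,\; (M,s') \models \varphi\} \), just as in the unambiguous case of~\TAz{}. Since the assumption also forces \( (M,s) \models \varphi \to \Paphi \), the depth of \( a \) at \( (1,s) \) is exactly \( d(a,s) - \d{\varphi} \), so \( \P{a}{\d{\psi}} \) after the update is equivalent to \( \P{a}{\d{\varphi} + \d{\psi}} \) before, and the proof reduces to the same calculation that yielded~\TAz{}.

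For~\KIp{}, the three conjuncts of \( \F(K_a \psi) \) are designed so that the obstacles of the ambiguous setting are exactly neutralized. The conjunct \( \neg K^\infty_a(\varphi \to \Paphi) \) guarantees a bridge world \( s^* \sim_a s \) with \( (M,s^*) \models \varphi \wedge \neg \Paphi \), forcing the equivalence class of \( (1,s) \) to contain every \( (0,s') \) with \( s' \sim_a s \); the conjunct \( K^\infty_a(\varphi \to \neg \Paphi \vee \P{a}{\d{\varphi}+\d{\psi}}) \) ensures that wherever \( a \) does perceive \( \varphi \), its residual budget after the decrement still covers \( \psi \), so the depth check at \( (1,s) \) matches the depth check at \( s \); and \( K^\infty_a \F(\psi) \) carries the condition through the recursion. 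I would then run the structural induction: the propositional and depth-atom cases are immediate because valuations and \( a \)'s depth are preserved across the two copies reachable in the class, the conjunction and negation cases are routine, and the \( K^\infty_b \) and \( K_b \) cases fall out once the induction hypothesis is applied inside the equivalence class, using \( K^\infty_a \F(\psi) \) to invoke the hypothesis at each \( s' \sim_a s \).

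The main obstacle, as in the proof of~\KI{}, will be the public-announcement case \( \psi = [\psi'] \chi \), where successive updates compound both the splitting of worlds and the depth bookkeeping. This will require strengthening the induction hypothesis to an iterated form analogous to~\eqref{eq:ihkistr}, tracking a finite sequence of announcements \( \psi_1, \ldots, \psi_n \) and asserting the equivalence of \( \chi \)-truth at \( 1_n(s') \) in \( M \mid \psi_1 \mid \cdots \mid \psi_n \) with its truth at the corresponding states in \( M \mid \varphi \mid \psi_1 \mid \cdots \mid \psi_n \), on both the \( (0,s') \) and \( (1,s') \) branches. The recursive structure of \( \F \) through \( \F([\psi_1]\psi_2) = \F(\psi_1) \wedge \F(\psi_2) \) is precisely what lets the premise propagate inside the nested updates, and once the strengthened hypothesis is in place, the chain of rewrites from~\eqref{eq:kipa} transfers to the ambiguous setting with only notational changes.
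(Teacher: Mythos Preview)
Your plan is essentially the paper's own argument: for~\TA{} you reduce to the same equivalence-class and depth calculations as in~\TAz{} (the paper does exactly this, using the hypothesis to force the class of \( (1,s) \) to be \( \{(1,s'):s'\sim_a s,\ (M,s')\models\varphi\} \) and then matching depth conditions); for~\KIp{} you set up the induction hypothesis analogous to~\eqref{eq:ihki}, read off the role of each conjunct of \( \F(K_a\psi) \) exactly as the paper does, and correctly anticipate that the announcement case forces the strengthening to an iterated form like~\eqref{eq:ihkistr}, with \( \F([\psi_1]\psi_2)=\F(\psi_1)\wedge\F(\psi_2) \) supplying the inner premises. One minor remark: \TA{} is not itself proved by structural induction on \( \psi \) (your detailed description gets this right---it is a direct calculation once the class is identified), and in the \( K_b \) case of~\KIp{} the induction hypothesis is re-invoked at \( s' \) with agent \( b \) (using \( K^\infty_b\F(\chi) \) extracted from \( K^\infty_a\F(K_b\chi) \)), not with agent \( a \); but these are details your plan already accommodates.
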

\begin{proof}
For~\KIp{}, in light of equations~\eqref{eq:rewrLHS} and~\eqref{eq:rewrRHS}, we
use the following induction hypothesis,
\begin{align}\label{eq:ihkip}
	& \forall s, a, \quad (M, s) \models K^\infty_a \F(\psi) \implies \notag \\
	& \forall s' \sim_a s, \quad \begin{cases}
(M \mid \varphi, (0, s')) \models \psi
\iff
(M, s') \models \psi \\
(M, s') \models \varphi \implies
\left( (M \mid \varphi, (1, s')) \models \psi
\iff
(M, s') \models \psi \right).
\end{cases}
\end{align}
Assume that \( (M,s) \models \varphi \wedge \F(K_a \psi) \).
In particular, \( (M, s) \models \neg K^\infty_a(\varphi \to \Paphi) \).
First notice that this condition allows us to write, \( (0, s') \sim'_a (1, s)
\iff s' \sim_a s \).
Indeed, since there exists some \( s'' \sim_a s \) where \( a \) is of depth
strictly less than \( \d{\varphi} \) and \( \varphi \) holds, we deduce the
chain of connections, \( (1, s) \sim'_a (1, s'') \sim'_a (0, s'') \sim'_a (0, s')
\) for any \( s' \sim_a s \) (and the direct implication is immediate).

Moreover, we have assumed \( (M, s) \models \varphi \wedge (\varphi \to \neg
\Paphi \vee \P{a}{\d{\varphi} + \d{\psi}}) \).
In either case of the disjunction, the depth conditions of
equations~\eqref{eq:rewrLHS} and~\eqref{eq:rewrRHS} become equivalent as they did
in the proof of~\KI{}.
Therefore, proving the induction hypothesis~\eqref{eq:ihkip} is sufficient to
conclude~\KIp{} here.

The cases for atoms, negations and conjunctions is the same as in the proof of
Proposition~\ref{prop:col3app}, as the induction hypothesis holds because
\( \F(\neg \psi) = \F(\psi) \), \( \F(\psi_1 \wedge \psi_2) = \F( \psi_1)
\wedge \F(\psi_2) \), and by commutativity of \( K^\infty_a \) with conjunction.

If \( \psi = K_b \chi \) for some agent \( b \in \agents \), for some fixed \( s'
\sim_a s \), we know that \( (M, s') \models \neg K^\infty_b( \varphi \to
\P{b}{\d{\varphi}} ) \) as well as \( (M, s') \models K^\infty_b \F(\chi) \).
Moreover, the condition \( (M,s') \models K^\infty_b( \varphi \to \neg
\P{b}{\d{\varphi}} \vee \P{b}{\d{\varphi} + \d{\chi}}) \) implies that the depth
of \( b \) will be greater or equal to \( \d{\chi} \) in \( (M \mid \varphi, (1,
s')) \) if and only if it was in \( (M, s') \).
If \( (M, s') \models \varphi \), by once more using the induction
hypothesis~\eqref{eq:ihkip} for \( b \) in \( s' \), we obtain that,
\begin{align*}
(M \mid \varphi, (1, s')) \models \psi
& \iff d(b, s') \geq \d{\chi} \; \text{ and } \; \forall (j, s'') \sim'_b (1, s'), \;
	(M \mid \varphi, (j, s'')) \models \chi \\
& \iff d(b, s') \geq \d{\chi} \; \text{ and } \; \forall s'' \sim_b s', \; \begin{cases}
	(M \mid \varphi, (0, s'')) \models \chi \\
	(M, s'') \models \varphi \implies (M \mid \varphi, (1, s'')) \models \chi
	\end{cases} \\
& \iff d(b, s') \geq \d{\chi} \; \text{ and } \; \forall s'' \sim_b s', \;
	(M, s'') \models \chi \\
& \iff (M, s') \models \psi.
\end{align*}
The case for \( (0, s') \) is the same, since its equivalence class in \( M \mid
\varphi \) is the same and the depth condition is the same.
The case for \( \psi = K^\infty_b \chi \) is implied by this proof, as
there are no depth conditions to verify.

Finally, checking public announcements involves performing the same induction
hypothesis strengthening as in the proof of~\KI{} in its
equation~\eqref{eq:ihkistr}.
The new induction hypothesis becomes,
\begin{align}
& \forall s, a, \quad (M, s) \models K^\infty_a \F(\psi) \implies \notag \\
& \forall n \in \N, \; \forall \psi_1, \ldots, \psi_n, \; \forall s' \sim_a s, \;
(M, s') \models \P{a}{\d{\psi_1} + \cdots \d{\psi_n} + \d{\psi}} \text{ and }
(M, s') \models \neg \Paphi \implies \notag \\
& (M, s') \models \psi_1 \text{ and } 
(M \mid \psi_1, (1, s')) \models \psi_2 \text{ and }
\ldots 
\text{ and } 
(M \mid \psi_1 \mid \cdots \mid \psi_{n-1}, 1_{n-1}(s')) \models \psi_n \implies \notag \\
&
\begin{cases}
	(M \mid \psi_1 \mid \cdots \mid \psi_n, 1_n(s')) \models \psi \iff (M \mid \varphi  \mid \psi_1 \mid \cdots \mid \psi_n, 1_n((0, s'))) \models \psi \\
	(M, s') \models \varphi \implies \left( (M \mid \psi_1 \mid \cdots \mid \psi_n, 1_n(s')) \models \psi \iff
	(M \mid \varphi \mid \psi_1 \mid \cdots \mid \psi_n, 1_n((1, s'))) \models \psi \right).
\end{cases}\notag %
\end{align}
Note we slightly abuse notation here and some of these states might not exist,
the convention is that the equivalences need only hold when the states exist in
the models on both sides.
The implicant implies that the left-hand term always exists.

Checking atoms, depth atoms, negation and conjunction is the same as in the proof of~\KI{} once more.
Checking modal operators \( K_a \) and \( K^\infty_a \) is similar to the proof of~\KI{} using induction hypothesis~\eqref{eq:ihkistr}, but using the same reasoning as above for induction hypothesis~\eqref{eq:ihkip}: the induction hypothesis contained in \( \F \) tells us that the announcement is not perceived by the agent at each modal operator.

Finally, public announcements follow the exact same proof as they did in~\KI{} in
equation~\eqref{eq:kipa},
with the extra information that \( \F([\psi'] \chi) = \F(\psi') \wedge \F(\chi)
\), allowing us to obtain the assumption of the inductive hypothesis in both
inductive hypothesis applications (one for \( \psi' \) and one for \( \chi \)).

For~\TA{}, we assume without loss of generality that \( (M, s) \models
K^\infty_a(\varphi \to \P{a}{\depth{\varphi}}) \wedge \varphi \), this means in
particular the equivalence class of \( (1, s) \) in \( M \mid \varphi \) is \( \{
(1, s'), \; s' \sim_a s, \; (M, s') \models \varphi \} \) since no state
equivalent to \( s \) by \( \sim_a \) has \( a \) not deep enough for \( \varphi
\).
Using once more the same re-writings as in equation~\eqref{eq:rewrta}, it is
sufficient to prove that the depth conditions are the same.
This is the case because \( (M, s) \models \varphi \), therefore by the truth
axiom, \( (M, s) \models \Paphi \).
\end{proof}
\end{document}